\newcommand{\smallmnistmodel}{2C2F}
\newcommand{\mnistmodel}{4C3F}
\newcommand{\tinyimagenetmodel}{8C2F}
\newcommand{\mnistmodelsmall}{2C2F}
\newcommand{\cifarmodel}{6C2F}
\newcommand{\na}{-}
\newtheorem{definition}{Definition}
\newtheorem{theorem}{Theorem}
\newtheorem{lemma}{Lemma}
\newcommand\classRel{\stackrel{_\bot}{=}}
\newcommand\margin[2][\epsilon]{\bar{#2}^{#1}}
\newcommand\proj[2]{\text{proj$\left( #1 \rightarrow #2 \right)$}}
\icmltitlerunning{Globally-Robust Neural Networks}
\begin{document}

\twocolumn[
    \icmltitle{
        Globally-Robust Neural Networks
    }



    \icmlsetsymbol{equal}{*}

    \begin{icmlauthorlist}
    \icmlauthor{Klas Leino}{cmu}
    \icmlauthor{Zifan Wang}{cmu}
    \icmlauthor{Matt Fredrikson}{cmu}
    \end{icmlauthorlist}

    \icmlaffiliation{cmu}{Carnegie Mellon University, Pittsburgh, Pennsylvania, USA}

    \icmlcorrespondingauthor{Klas Leino}{kleino@cs.cmu.edu}
    \icmlcorrespondingauthor{Matt Fredrikson}{mfredrik@cmu.edu}

    \icmlkeywords{Machine Learning, ICML}

    \vskip 0.3in
]



\printAffiliationsAndNotice{}  

\setcounter{footnote}{1}

\begin{abstract}

The threat of adversarial examples has motivated work on training \emph{certifiably robust} neural networks to facilitate efficient verification of \emph{local robustness} at inference time.
We formalize a notion of \emph{global robustness}, which captures the operational properties of on-line local robustness certification while yielding a natural learning objective for robust training.
We show that widely-used architectures can be easily adapted to this objective by incorporating efficient global Lipschitz bounds into the network, yielding certifiably-robust models \emph{by construction} that achieve \emph{state-of-the-art} verifiable accuracy.
Notably, this approach requires significantly less time and memory than recent certifiable training methods, and leads to negligible costs when certifying points on-line; for example, our evaluation shows that it is possible to train a large robust Tiny-Imagenet model in a matter of hours.
Our models effectively leverage inexpensive global Lipschitz bounds for real-time certification, despite prior suggestions that tighter local bounds are needed for good performance;
we posit this is possible because our models are specifically trained to achieve tighter global bounds.
Namely, we prove that the maximum achievable verifiable accuracy for a given dataset is not improved by using a local bound.
An implementation of our approach is available on GitHub\footnote{Code available at ~{\fontsize{7.5}{11}\selectfont \url{https://github.com/klasleino/gloro}}}.
\end{abstract}


\section{Introduction}\label{sec:intro}

We consider the problem of training neural networks that are robust to input perturbations with bounded $\ell_p$ norm.
Precisely, given an input point, $x$, network, $F$, and norm bound, $\epsilon$, this means that $F$ makes the same prediction on all points within the $\ell_p$-ball of radius $\epsilon$ centered at $x$.

This problem is significant as deep neural networks have been shown to be vulnerable to \emph{adversarial examples}~\cite{PapernotMJFCS16,SzegedyZSBEGF13}, wherein perturbations are chosen to deliberately cause misclassification.
While numerous heuristic solutions have been proposed to address this problem, these solutions are often shown to be ineffective by subsequent adaptive attacks~\cite{Carlini17Detected}.
Thus, this paper focuses on training methods that produce models whose robust predictions can be efficiently certified against adversarial perturbations~\cite{lee20local_margin,tsuzuku18margin,NEURIPS2018_358f9e7b}.

\begin{figure}[t]
\centering
\includegraphics[page=1, height=0.55\columnwidth]{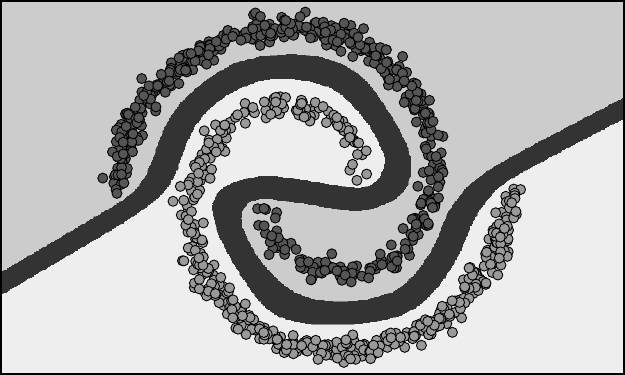}
\caption{
Illustration of global robustness. The model abstains from predicting on the margin between the classes (dark gray), which has width at least $\epsilon$.
}
\label{fig:margin_example}
\end{figure}

We begin by introducing a notion of \emph{global robustness} for classification models (Section~\ref{sec:global_rob}), which requires that classifiers maintain a separation of width at least $\epsilon$ (in feature space) between any pair of regions that are assigned different prediction labels.
This separation means that there are certain inputs on which a globally-robust classifier must refuse to give a prediction, instead signaling that a violation has occurred (see Figure~\ref{fig:margin_example}).
While requiring the model to abstain in some cases may at first appear to be a hindrance, we note that in operational terms this is no different than composing a model with a routine that only returns predictions when $\nicefrac{\epsilon}{2}$-local robustness can be certified.

While it is straightforward to construct a globally-robust model in this way via composition with a certification procedure, doing so with most current certification methods leads to severe penalties on performance or utility.
Most techniques for verifying local robustness are costly even on small models~\cite{Fischetti18MILP,fromherz20fgp,gehr2018ai,jordan19geocert,tjeng18MIP}, requiring several orders of magnitude more time than a typical forward pass of a network; on moderately-large CNNs, these techniques either time out after minutes or hours, or simply run out of memory.

One approach to certification that shows promise in this regard uses Lipschitz bounds to efficiently calculate the robustness region around a point~\cite{weng18fastlip,zhang18crown}.
In particular, when \emph{global} bounds are used with this approach, it is possible to implement the bound computation as a neural network of comparable size to the original (Section~\ref{sec:bounds}), making on-line certification nearly as efficient as inference.
Unfortunately, current training methods do not produce models with sufficiently small global bounds for this to succeed~\cite{weng18fastlip}.
Recent work~\cite{lee20local_margin} explored the possibility of training networks with sufficiently small \emph{local} bounds, but the training cost in time and memory remains prohibitive in many cases.

Surprisingly, we find that using global Lipschitz bounds for certification may not be as limiting as previously thought~\cite{huster2018limitations,yang20acc_rob_tradeoff}.
We show that for any set of points that can be robustly-classified using a local Lipschitz bound, there exists a model whose global bound implies the same robust classification (Theorem~\ref{thm:global_vs_local}).
This motivates a new approach to certifiable training that makes exclusive use of global bounds (Section~\ref{sec:gloro_nets}).
Namely, we construct a globally-robust model that incorporates a Lipschitz bound in its forward pass to define an additional ``robustness violation'' class, and use standard training methods to discourage violations while simultaneously encouraging accuracy.

Focusing on the case of deterministic guarantees against $\ell_2$-bounded perturbations, we show that this approach yields state-of-the-art verified-robust accuracy (VRA), while imposing little overhead during training and \emph{none} during certification.
For example, we find that we can achieve $63\%$ VRA with a large robustness radius of $\epsilon = 1.58$ on MNIST, surpassing all prior approaches by multiple percentage points.
We also achieve state-of-the-art VRA on CIFAR-10, and scale to larger applications such as Tiny-Imagenet (see Section~\ref{sec:eval}).

To summarize, we provide a method for training certifiably-robust neural networks that is simple, fast, capable of running with limited memory, and that yields state-of-the-art deterministic verified accuracy.
We prove that the potential of our approach is not hindered by its simplicity; rather, its simplicity is an asset---our empirical results demonstrate the many benefits it enjoys over more complicated methods.


\vspace{0.35em}
\section{Constructing Globally-Robust Networks}\label{sec:method}

In this section we present our method for constructing globally-robust networks, which we will refer to as \emph{GloRo Nets}.
We begin in Section~\ref{sec:global_rob} by formally introducing our notion of \emph{global robustness}, after briefly covering the essential background and notation.
We then show how to mathematically construct GloRo Nets in Section~\ref{sec:gloro_nets}, and prove that our construction is globally robust.

\subsection{Global Robustness}\label{sec:global_rob}

Let $f : \mathds{R}^n \rightarrow \mathds{R}^m$ be a neural network that categorizes points into $m$ different classes.
Let $F$ be the function representing the predictions of $f$, i.e., $F(x) = \text{argmax}_{i}\left\{f_i(x)\right\}$.

$F$ is said to be $\epsilon$-\emph{locally-robust} at point $x$ if it makes the same prediction on all points in the $\epsilon$-ball centered at $x$ (Definition~\ref{def:local_robustness}).

\begin{definition}
\label{def:local_robustness}
(Local Robustness) A model, $F$, is $\epsilon$-\emph{locally-robust} at point, $x$, with respect to norm, $||\cdot||$, if~~$\forall$ $x'$,
$$
||x - x'|| \leq \epsilon \ \ \Longrightarrow \ \ F(x) = F(x').
$$
\end{definition}
Most work on robustness verification has focused on this local robustness property;
in this work, we present a natural notion of \emph{global robustness}, which captures the operational properties of on-line local robustness certification.

Clearly, local robustness cannot be simultaneously satisfied at every point---unless the model is entirely degenerate, there will always exist points that are arbitrarily close to a decision boundary.
Instead, we will introduce a global robustness definition that can be satisfied even on models with non-trivial behavior by using an additional class, $\bot$, that signals that a point cannot be certified as globally robust.
At a high level, we can think of separating each of the classes with a margin of width at least $\epsilon$ in which the model always predicts $\bot$.
In order to satisfy global robustness, we require that no two points at distance $\epsilon$ from one another are labeled with different non-$\bot$ classes.

More formally, let us define the following relation ($\classRel$): we will say that $c_1 \classRel c_2$ if $c_1 = \bot$ or $c_2 = \bot$ or $c_1 = c_2$.
Using this relation, we provide our formal notion of global robustness in Definition~\ref{def:global_robustness}.

\begin{definition}
\label{def:global_robustness}
(Global Robustness) A model, $F$, is $\epsilon$-\emph{globally-robust}, with respect to norm, $||\cdot||$, if~~$\forall$ $x_1$, $x_2$,
$$
||x_1 - x_2|| \leq \epsilon \ \ \Longrightarrow \ \ F(x_1) \classRel F(x_2).
$$
\end{definition}

An illustration of global robustness is shown in Figure~\ref{fig:margin_example}.
While global robustness can clearly be trivially satisfied by labeling all points as $\bot$, we note that the objective of robust training is typically to achieve high robustness \emph{and} accuracy (i.e., VRA), thus ideally only points off the data manifold are labeled $\bot$, as illustrated in Figure~\ref{fig:margin_example}.

\subsection{Certified Globally-Robust Networks}\label{sec:gloro_nets}

Because of the threat posed by adversarial examples, and the elusiveness of such attacks against heuristic defenses~\cite{Carlini17Detected}, there has been a volume of previous work seeking to verify local robustness on specific points of interest.
In this work, we shift our focus to global robustness directly, resulting in a method for producing models that make predictions that are verifiably robust \emph{by construction}.

Intuitively, we aim to instrument a model with an extra output, $\bot$, that labels a point as ``not locally-robust,'' such that the instrumented model  predicts a non-$\bot$ class \emph{only if the point is locally-robust} (with respect to the original model).
At a high level, we do this by ensuring that in order to avoid predicting $\bot$, the maximum output of $f$ must surpass the other outputs by a sufficient margin.
While this margin is measured in the output space, we can ensure it is sufficiently large to ensure local robustness by relating the output space to the input space via an upper bound on the model's Lipschitz constant.

Suppose that $K_i$ is an upper bound on the Lipschitz constant for $f_i$. I.e., for all $x_1$, $x_2$, Equation~\ref{eq:lipschitz_def} holds.
Intuitively, $K_i$ bounds the largest possible change in the logit output for class $i$ per unit change in the model's input.
\begin{equation}
\label{eq:lipschitz_def}
\frac{|f_i(x_1) - f_i(x_2)|}{||x_1 - x_2||} \leq K_i
\end{equation}
Let $y = f(x)$, and let $j = F(x)$, i.e., the class predicted on point $x$. Let $y_\bot = \max_{i\neq j}\left\{y_i + (K_i + K_j)\epsilon\right\}$.
Intuitively, $y_\bot$ captures the value that the class that is most competitive with the chosen class would take under the worst-case change to $x$ within an $\epsilon$-ball.
Figure~\ref{fig:construction_illustration} provides an illustration of this intuition.

We then define the instrumented model, or GloRo Net, $\margin f$, as follows: $\margin f_i(x) ::= y_i$ $\forall i\in[m]$ and $\margin f_\bot(x) ::= y_\bot$; that is, $\margin f$ concatenates $y_\bot$ with the output of $f$.

\begin{figure}
\centering
\resizebox{\columnwidth}{!}{%
\includegraphics{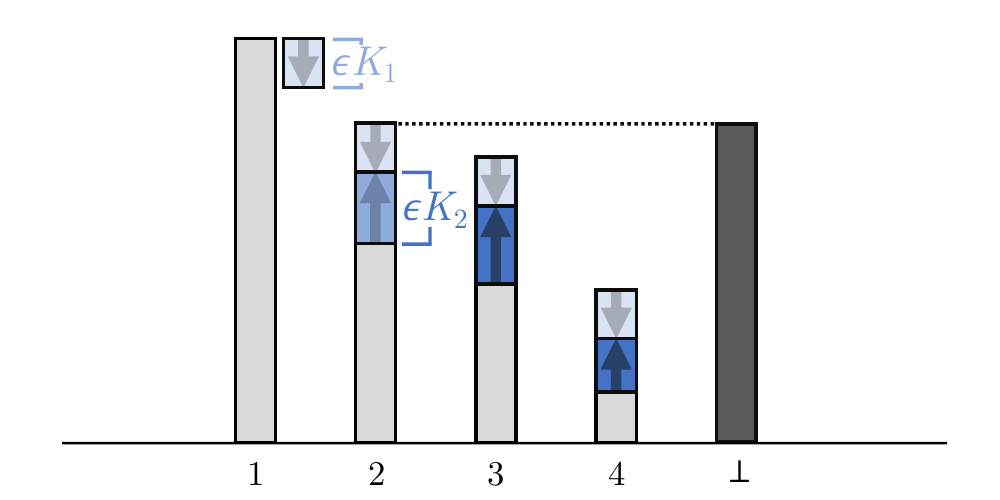}}
\caption{
	Illustration of calculating the $\bot$ logit.
	Note that $\epsilon K_i$ provides a bound on changes to logit $i$ within an $\epsilon$-ball.
	The $\bot$ logit is chosen to account for the predicted class \emph{decreasing} by the maximum amount and each other class \emph{increasing} by the maximum amount.
	If the $\bot$ logit does not surpass that of the predicted class, then no class can overtake the predicted class within an $\epsilon$-ball (Theorem~\ref{thm:local_robustness}).
}
\label{fig:construction_illustration}
\end{figure}

We show that the predictions of this GloRo Net, $\margin F$, can be used to certify the predictions of the instrumented model, $F$:
whenever $\margin F$ predicts a class that is not $\bot$, the prediction coincides with the prediction of $F$, and $F$ is guaranteed to be locally robust at that point (Theorem~\ref{thm:local_robustness}).

\begin{theorem}
\label{thm:local_robustness}
If $\margin F(x) \neq \bot$, then $\margin F(x) = F(x)$ and $F$ is $\epsilon$-locally-robust at $x$.
\end{theorem}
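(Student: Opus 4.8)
The plan is to unpack the definition of $\margin F$ and then chain two applications of the Lipschitz bound in Equation~\ref{eq:lipschitz_def}. First I would note that since $j = F(x) = \text{argmax}_i\{y_i\}$, the logit $y_j$ is maximal among the $m$ original coordinates of $\margin f(x) = (y_1,\dots,y_m,y_\bot)$; hence the augmented argmax $\margin F(x)$ is necessarily either $j$ or $\bot$. Consequently the hypothesis $\margin F(x) \neq \bot$ immediately yields $\margin F(x) = j = F(x)$, which is the first conclusion, and moreover forces $y_j \geq y_\bot$. Substituting $y_\bot = \max_{i \neq j}\{y_i + (K_i + K_j)\epsilon\}$, this single inequality is equivalent to the family of margin inequalities $y_j - y_i \geq (K_i + K_j)\epsilon$ holding for every $i \neq j$.

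Next I would establish $\epsilon$-local robustness directly from Definition~\ref{def:local_robustness}: fix an arbitrary $x'$ with $||x - x'|| \leq \epsilon$ and show $F(x') = j$. Applying Equation~\ref{eq:lipschitz_def} to $f_j$ gives $f_j(x') \geq f_j(x) - K_j\epsilon = y_j - K_j\epsilon$, and applying it to each $f_i$ gives $f_i(x') \leq y_i + K_i\epsilon$. Subtracting these and invoking the margin inequality from the previous step yields, for every $i \neq j$, $f_j(x') - f_i(x') \geq (y_j - y_i) - (K_i + K_j)\epsilon \geq 0$, so coordinate $j$ attains the maximum logit of $f$ at $x'$ and therefore $F(x') = j = F(x)$. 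Since $x'$ was an arbitrary point of the $\epsilon$-ball around $x$, $F$ is $\epsilon$-locally-robust at $x$, completing the proof.

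I do not expect a genuine obstacle here; the theorem is essentially a two-line Lipschitz argument, and the only place demanding care is the handling of ties in the $\text{argmax}$ defining $F$ and $\margin F$. If ties are resolved adversarially one wants the relevant inequalities to be strict, which amounts to interpreting ``$\margin F(x) \neq \bot$'' as $y_j > y_\bot$ and reading the last step as $f_j(x') > f_i(x')$; I would fix an explicit tie-breaking convention and observe that $\margin F$ inherits it consistently from $F$ on the shared coordinates. A minor bookkeeping remark is that $y_\bot$ (and hence the statement) presumes $m \geq 2$, which always holds in the classification setting under consideration.
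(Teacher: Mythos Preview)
Your proposal is correct and follows essentially the same route as the paper's proof: both arguments first observe that $\margin F(x)\neq\bot$ forces $y_j$ to dominate $y_\bot$, hence $\margin F(x)=j=F(x)$, and then chain the Lipschitz bound \eqref{eq:lipschitz_def} on $f_j$ and each $f_i$ to conclude $f_j(x')\ge f_i(x')$ for all $x'$ in the $\epsilon$-ball. Your treatment of the tie-breaking issue is slightly more explicit than the paper's (which silently takes $y_j>y_\bot$ and derives strict inequalities throughout), but the substance is identical.
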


The proof of Theorem~\ref{thm:local_robustness} is given in Appendix~\ref{proof:local_robustness}.

Note that in this formulation, we assume that the predicted class, $j$, will decrease by the maximum amount within the $\epsilon$-ball, while all other classes increase by their respective maximum amounts. 
This is a conservative assumption that guarantees local robustness;
however, in practice, we can dispose of this assumption by instead calculating the Lipschitz constant of the margin by which the logit of the predicted class surpasses the other logits, i.e., the Lipschitz constant of $y_j - y_i$ for $i\neq j$.
The details of this tighter variant are presented in Appendix~\ref{proof:tighter_bound}, along with the corresponding correctness proof.

Notice that the GloRo Net, $\margin F$, will always predict $\bot$ on points that lie directly on the decision boundary of $F$.
Moreover, any point that is within $\epsilon$ of the decision boundary will also be labeled as $\bot$ by $\margin F$.
From this, it is perhaps clear that GloRo Nets achieve global robustness (Theorem~\ref{thm:global_robustness}).

\begin{theorem}
\label{thm:global_robustness}
$\margin[\nicefrac{\epsilon}{2}]{F}$ is $\epsilon$-globally-robust.
\end{theorem}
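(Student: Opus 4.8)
The plan is to reduce global robustness at radius $\epsilon$ to the local-robustness guarantee of Theorem~\ref{thm:local_robustness}, instantiated at radius $\nicefrac{\epsilon}{2}$, via a midpoint argument. Fix arbitrary $x_1, x_2$ with $||x_1 - x_2|| \leq \epsilon$. If either $\margin[\nicefrac{\epsilon}{2}]{F}(x_1) = \bot$ or $\margin[\nicefrac{\epsilon}{2}]{F}(x_2) = \bot$, then $\margin[\nicefrac{\epsilon}{2}]{F}(x_1) \classRel \margin[\nicefrac{\epsilon}{2}]{F}(x_2)$ holds immediately by definition of $\classRel$, so the only case to handle is when neither output is $\bot$. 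Suppose then toward a contradiction that $\margin[\nicefrac{\epsilon}{2}]{F}(x_1) = c_1$ and $\margin[\nicefrac{\epsilon}{2}]{F}(x_2) = c_2$ with $c_1, c_2 \neq \bot$ and $c_1 \neq c_2$.

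First I would apply Theorem~\ref{thm:local_robustness}, with $\epsilon$ replaced by $\nicefrac{\epsilon}{2}$, at each of $x_1$ and $x_2$. Since neither GloRo output is $\bot$, this yields $F(x_1) = c_1$ and $F(x_2) = c_2$, and moreover that $F$ is $\nicefrac{\epsilon}{2}$-locally-robust at both $x_1$ and $x_2$.

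Next I would introduce the midpoint $x_m = \nicefrac{1}{2}(x_1 + x_2)$. By homogeneity of the norm, $||x_1 - x_m|| = \nicefrac{1}{2}||x_1 - x_2|| \leq \nicefrac{\epsilon}{2}$, and likewise $||x_2 - x_m|| \leq \nicefrac{\epsilon}{2}$. Local robustness of $F$ at $x_1$ then gives $F(x_m) = F(x_1) = c_1$, while local robustness at $x_2$ gives $F(x_m) = F(x_2) = c_2$; hence $c_1 = c_2$, contradicting our assumption. Therefore $\margin[\nicefrac{\epsilon}{2}]{F}(x_1) \classRel \margin[\nicefrac{\epsilon}{2}]{F}(x_2)$ for all such $x_1, x_2$, which is exactly Definition~\ref{def:global_robustness}.

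There is essentially no difficult step here; the only thing to be careful about is the bookkeeping on the radius. The GloRo Net must be constructed with parameter $\nicefrac{\epsilon}{2}$ precisely so that each point's certified local-robustness ball has radius $\nicefrac{\epsilon}{2}$, and two such balls centered at points within distance $\epsilon$ of one another are guaranteed to intersect — the midpoint being a witness. Since the argument uses Theorem~\ref{thm:local_robustness} as a black box, it inherits whatever norm that theorem is stated for, and the midpoint computation above is valid for any norm, so no additional assumptions are needed.
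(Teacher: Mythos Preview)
Your proposal is correct and is essentially identical to the paper's own proof: both handle the $\bot$ case trivially, invoke Theorem~\ref{thm:local_robustness} at radius $\nicefrac{\epsilon}{2}$ on each of $x_1,x_2$, and use the midpoint to force $F(x_m)=c_1=c_2$. The only cosmetic difference is that you phrase the non-$\bot$ case as a contradiction whereas the paper states it directly.
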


The proof of Theorem~\ref{thm:global_robustness} is given in Appendix~\ref{proof:global_robustness}.

\section{Revisiting the Global Lipschitz Constant}\label{sec:local_bounds}

The global Lipschitz constant gives a bound on the maximum rate of change in the network's output over the entire input space.
For the purpose of certifying robustness, it suffices to bound the maximum rate of change in the network's output over any pair of points \emph{within the $\epsilon$-ball} centered at the point being certified, i.e., the \emph{local} Lipschitz constant.
Recent work has explored methods for obtaining upper bounds on the local Lipschitz constant~\cite{weng18fastlip,zhang18crown,lee20local_margin};
the construction of GloRo Nets given in Section~\ref{sec:method} remains correct whether $K$ represents a global or a local Lipschitz constant.

The advantage to using a local bound is, of course, that we may expect tighter bounds; after all, the local Lipschitz constant is no larger than the global Lipschitz constant.
However, using a local bound also has its drawbacks.
First, a local bound is typically more expensive to compute.
In particular, a local bound always requires more memory, as each instance has its own bound, hence the required memory grows with the batch size.
This in turn reduces the amount of parallelism that can be exploited when using
a local bound, reducing the model's throughput.

Furthermore, because the local Lipschitz constant is different for every point, it must be computed every time the network sees a new point.
By contrast, the global bound can be computed in advance, meaning that verification via the global bound is essentially free.
This makes the global bound advantageous, assuming that it can be effectively leveraged for verification.

It may seem initially that a local bound would have greater prospects for successful certification.
First, \emph{local} Lipschitzness is sufficient for robustly classifying well-separated data~\cite{yang20acc_rob_tradeoff}; that is, global Lipschitzness is not necessary.
Meanwhile, global bounds on typical networks have been found to be prohibitively large~\cite{weng18fastlip}, while local bounds on in-distribution points may tend to be smaller on the same networks.
However, the potential disadvantages of a global bound become less clear if the model is specifically trained to have a small global Lipschitz constant.

For example, GloRo Nets that use a global Lipschitz constant will be penalized for incorrect predictions if the global Lipschitz constant is not sufficiently small to verify its predictions;
therefore, the loss actively discourages any unnecessary steepness in the network function.
In practice, this natural regularization of the global Lipschitz constant may serve to make the steepness of the network function more uniform, such that the global Lipschitz constant will be similar to the local Lipschitz constant.

We show that this is possible in theory, in that for any network for which local robustness can be verified on some set of points using the local Lipschitz constant, there exists a model on which the same points can be certified using the global Lipschitz constant (Theorem~\ref{thm:global_vs_local}).
This suggests that if training is successful, our approach has the same potential using a global bound as using a local bound.

\begin{theorem}\label{thm:global_vs_local}
Let $f$ be a binary classifier that predicts $1 \Longleftrightarrow f(x) > 0$.
Let $K_L(x, \epsilon)$ be the local Lipschitz constant of $f$ at point $x$ with radius $\epsilon$.

Suppose that for some finite set of points, $S$, $\forall x \in S$, $|f(x)| > \epsilon K_L(x, \epsilon)$, i.e., all points in $S$ can be verified via the local Lipschitz constant.

Then there exists a classifier, $g$, with global Lipschitz constant $K_G$, such that $\forall x\in S$,
(1) 
$g$ makes the same predictions as $f$ on $S$, and (2) $|g(x)| > \epsilon K_G$, i.e., all points in $S$ can be verified via the global Lipschitz constant.
\end{theorem}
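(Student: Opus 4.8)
The plan is to notice that, although the statement is phrased via Lipschitz constants, the hypothesis secretly forces a \emph{geometric separation} between the differently-labeled points of $S$, after which a globally-flat function trivially classifies a separated finite set; hence nothing is lost in moving from the local bound to a global one.

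First I would establish sign-preservation on $\epsilon$-balls around points of $S$: for $x \in S$ and any $x'$ with $\|x - x'\| \le \epsilon$, the definition of the local Lipschitz constant gives $|f(x') - f(x)| \le \epsilon K_L(x,\epsilon) < |f(x)|$, so $f(x')$ has the same strict sign as $f(x)$. Writing $S^+ = \{x \in S : f(x) > 0\}$ and $S^- = \{x \in S : f(x) < 0\}$ (these partition $S$, since $|f(x)| > \epsilon K_L(x,\epsilon) \ge 0$ forces $f(x) \neq 0$ on $S$), I then claim $\|x^+ - x^-\| > 2\epsilon$ for every $x^+ \in S^+$ and $x^- \in S^-$. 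Indeed, if $\|x^+ - x^-\| \le 2\epsilon$, their midpoint $m$ lies in both $\bar B(x^+,\epsilon)$ and $\bar B(x^-,\epsilon)$, so the previous observation forces $f(m) > 0$ and $f(m) < 0$ simultaneously --- a contradiction. Since $S$ is finite, $d := \operatorname{dist}(S^+, S^-)$ is a minimum over finitely many pairs, so $d > 2\epsilon$ strictly (assuming both sets nonempty; if, say, $S^- = \emptyset$, take $g$ to be any positive constant, which has global Lipschitz constant $0$ and satisfies both conclusions trivially).

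Finally I would define $g(x) := \operatorname{dist}(x, S^-) - d/2$. As a distance-to-a-set function shifted by a constant, $g$ is $1$-Lipschitz, so we may take $K_G = 1$. On $S^-$ we have $g(x) = -d/2 < -\epsilon$, while on $S^+$ we have $\operatorname{dist}(x, S^-) \ge d$, hence $g(x) \ge d/2 > \epsilon$; thus $g$ agrees in sign (hence in prediction) with $f$ on all of $S$, and $|g(x)| > \epsilon = \epsilon K_G$ there, which is exactly conclusions (1) and (2). If a neural-network witness is wanted, $g$ --- a min over finitely many $\ell_2$-distance-plus-constant terms --- can be realized, or approximated closely enough to preserve the strict inequalities on the finite set $S$, by a standard architecture, and rescaling converts $K_G = 1$ into any desired positive value.

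The only real obstacle is the separation claim: per-point local certifiability does \emph{not} compose in general, yet here it implies that the certified positive and negative points are globally $2\epsilon$-separated. Once that is in hand, the signed-distance construction is routine --- which is precisely the point, namely that a global bound loses nothing relative to a local one on any set a local bound can certify.
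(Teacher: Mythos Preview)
Your proof is correct and shares the paper's central observation---that local certifiability on $S$ forces any two oppositely-labeled points of $S$ to be more than $2\epsilon$ apart (the paper isolates this as a separate lemma, proved by the same midpoint argument you give inline). Where you diverge is in the witness. The paper builds $g$ as a signed distance to the decision boundary of the Voronoi tessellation of $S$: with $B$ the set of Voronoi faces separating differently-labeled cells and $d(x)$ the distance from $x$ to $B$, it sets $g(x) = \text{sign}(f(p_x))\,d(x)/\epsilon$, then checks $K_G \le 1/\epsilon$ and $d(p) > \epsilon$ for $p \in S$. Your construction, $g(x) = \operatorname{dist}(x, S^-) - d/2$, is strictly more elementary: it avoids Voronoi machinery entirely and the $1$-Lipschitz and sign checks are immediate. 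What the paper's version buys is a cleaner extension to the categorical (multi-class) case---there the Voronoi faces give a single boundary shared by all classes, whereas your asymmetric distance-to-$S^-$ would need to be redone per class pair---and a $g$ whose zero set is exactly the $1$-NN decision boundary, which is conceptually tidy. For the binary statement as written, though, your route is shorter and loses nothing.
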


\begin{figure}[t]
\centering
\includegraphics[page=1, height=0.5\columnwidth]{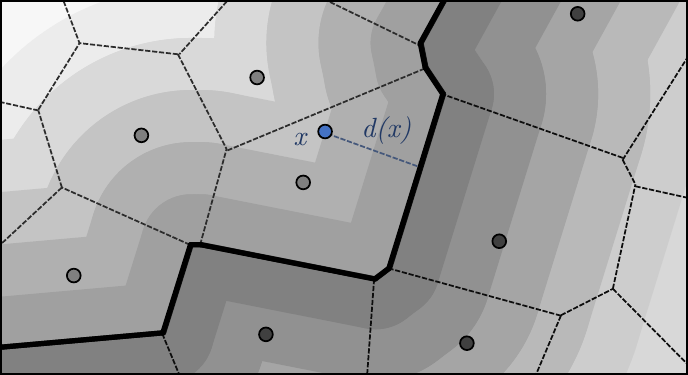}
\caption{Illustration of a function, $g$, constructed to satisfy Theorem~\ref{thm:global_vs_local}. The points in $S$ are shown in light and dark gray, with different shades indicating different labels. The Voronoi tessellation is outlined in black, and the faces belonging to the decision boundary are highlighted in bold. The level curves of $g$ are shown in various shades of gray and correspond to points, $x$, at some fixed distance, $d(x)$, from the decision boundary.}
\label{fig:proof_sketch}
\end{figure}

Theorem~\ref{thm:global_vs_local} is stated for binary classifiers, though the result holds for categorical classifiers as well.
Details on the categorical case and the proof of Theorem~\ref{thm:global_vs_local} can be found in Appendix~\ref{proof:global_vs_local}; however we provide the intuition behind the construction here.
The proof relies on the following lemma, which states that among locally-robust points, points that are classified differently from one another are $2\epsilon$-separated.
The proof of Lemma~\ref{lemma:separation} can be found in Appendix~\ref{proof:separation}.

\begin{lemma}\label{lemma:separation}
Suppose that for some classifier, $F$, and some set of points, $S$, $\forall x \in S$, $F$ is $\epsilon$-locally-robust at $x$.
Then $\forall x_1, x_2\in S$ such that $F(x_1) \neq F(x_2)$, $||x_1 - x_2|| > 2\epsilon$.
\end{lemma}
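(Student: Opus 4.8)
The plan is a direct proof by contradiction that rests on nothing more than the triangle inequality and a midpoint argument. Suppose, for the sake of contradiction, that there exist $x_1, x_2 \in S$ with $F(x_1) \neq F(x_2)$ but $||x_1 - x_2|| \leq 2\epsilon$. I would then consider the midpoint $m := \tfrac{1}{2}(x_1 + x_2)$, which lives in the ambient space $\mathds{R}^n$ (it need not belong to $S$, but that is irrelevant, as will be explained below).

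First I would bound the distances from $m$ to each of $x_1$ and $x_2$: since $m - x_1 = \tfrac{1}{2}(x_2 - x_1)$ and $m - x_2 = \tfrac{1}{2}(x_1 - x_2)$, homogeneity of the norm gives $||x_1 - m|| = ||x_2 - m|| = \tfrac{1}{2}||x_1 - x_2|| \leq \epsilon$. Next I would invoke the hypothesis that $F$ is $\epsilon$-locally-robust at $x_1$: by Definition~\ref{def:local_robustness} applied with $x' = m$, the bound $||x_1 - m|| \leq \epsilon$ forces $F(m) = F(x_1)$. Symmetrically, $\epsilon$-local-robustness at $x_2$ together with $||x_2 - m|| \leq \epsilon$ forces $F(m) = F(x_2)$. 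Chaining these two equalities yields $F(x_1) = F(m) = F(x_2)$, contradicting $F(x_1) \neq F(x_2)$. Hence the assumption $||x_1 - x_2|| \leq 2\epsilon$ is untenable, and we conclude $||x_1 - x_2|| > 2\epsilon$, as claimed.

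I do not anticipate any real obstacle here; the only point worth stating carefully is that Definition~\ref{def:local_robustness} quantifies over \emph{all} points $x'$ in the input space within distance $\epsilon$, so it legitimately applies to the midpoint $m$ even though $m \notin S$ in general. One could equivalently phrase the argument without explicitly naming the midpoint by noting that the $\epsilon$-ball around $x_1$ and the $\epsilon$-ball around $x_2$ must be disjoint (else a common point would be labeled both $F(x_1)$ and $F(x_2)$), and two balls of radius $\epsilon$ are disjoint precisely when their centers are more than $2\epsilon$ apart; but the midpoint version is the cleanest to write down.
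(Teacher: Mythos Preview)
Your proof is correct and essentially identical to the paper's own argument: both assume $||x_1 - x_2|| \leq 2\epsilon$, take the midpoint, observe it lies within $\epsilon$ of each $x_i$, and invoke local robustness at each to derive the contradiction $F(x_1) = F(x_2)$. Your additional remark that $m$ need not belong to $S$ because Definition~\ref{def:local_robustness} quantifies over all $x'$ is a helpful clarification, but otherwise there is nothing to add.
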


In the proof of Theorem~\ref{thm:global_vs_local}, we construct a function, $g$, whose output on point $x$ increases linearly with $x$'s minimum distance to any face in the Voronoi tessellation of $S$ that separates points in $S$ with different labels.
An illustration with an example of $g$ is shown in Figure~\ref{fig:proof_sketch}.
Notably, the local Lipschitz constant of $g$ is everywhere the same as the global constant.

However, we note that while Theorem~\ref{thm:global_vs_local} suggests that networks exist \emph{in principle} on which it is possible to use the global Lipschitz constant to certify $2\epsilon$-separated data, it may be that such networks are not easily obtainable via training.
Furthermore, as raised by \citet{huster2018limitations}, an additional potential difficulty in using the global bound for certification is the estimation of the global bound itself.
Methods used for determining an upper bound on the global Lipschitz constant, such as the method presented later in Section~\ref{sec:bounds}, may provide a loose upper bound that is insufficient for verification even when the true bound would suffice.
Nevertheless, our evaluation in Section~\ref{sec:eval} shows that in practice the global bound \emph{can} be used effectively for certification (Section~\ref{sec:eval:accuracy}), and that the bounds obtained on the models trained with our approach are far tighter than those obtained on standard models (Section~\ref{sec:eval:tightness}).

\section{Implementation}\label{sec:implementation}

In this section we describe how GloRo Nets can be trained and implemented.
Section~\ref{sec:training} covers training, as well as the loss functions used in our evaluation, and Section~\ref{sec:bounds} provides detail on how we compute an upper bound on the global Lipschitz constant.

\subsection{Training}\label{sec:training}
Crucially, because certification is intrinsically captured by the GloRo Net's predictions---specifically, the $\bot$ class represents inputs that cannot be verified to be locally robust---a standard learning objective for a GloRo Net corresponds to a robust objective on the original model that was instrumented.
That is, we can train a GloRo Net by simply appending a zero to the one-hot encodings of the original data labels (signifying that $\bot$ is never the correct label), and then optimizing a standard classification learning objective, e.g.,  with cross-entropy loss.
Using this approach, $\margin F$ will be penalized for incorrectly predicting each point, $x$, unless $x$ is both predicted correctly \emph{and} $F$ is $\epsilon$-locally-robust at $x$.

While the above approach is sufficient for training models with competitive VRA, we find that the resulting VRA can be further improved using a loss inspired by TRADES~\cite{zhang19trades}, which balances separately the goals of making \emph{correct} predictions, and making \emph{robust} predictions.
Recent work~\cite{yang20acc_rob_tradeoff} has shown that TRADES effectively controls the local Lipschitz continuity of networks.
While TRADES is implemented using adversarial perturbations, which provide an under-approximation of the robust error, GloRo Nets naturally lend themselves to a variant that uses an over-approximation, as shown in Definition~\ref{def:trades_loss}.

\begin{definition}\label{def:trades_loss}
(TRADES Loss for GloRo Nets)
Given a network, $f$, cross-entropy loss $L_{\text{CE}}$, and parameter, $\lambda$, the GloRo-TRADES loss ($L_T$) of $(x, y)$ is
$$
L_{T}(x,y) = L_{\text{CE}}\big(f(x), y\big) + \lambda D_{\text{KL}}\left(\margin f(x) || f(x)\right)
$$
\end{definition}

Intuitively, $L_T$ combines the normal classification loss with the over-approximate robust loss, assuming that the class predicted by the underlying model is correct.
Empirically we find that using the KL divergence, $D_{\text{KL}}$, in the second term produces the best results, although in many cases using $L_{\text{CE}}$ in both terms works as well.

\subsection{Bounding the Global Lipschitz Constant}\label{sec:bounds}
There has been a great deal of work on calculating upper bounds on the Lipschitz constants of neural networks (see Section~\ref{sec:related} for a discussion).
Our implementation uses the fact that
the product of the spectral norm of each of the individual layers of a feed-forward network provides an upper bound on the Lipschitz constant of the entire network~\cite{SzegedyZSBEGF13}.
That is, if the output at class $i$ of a neural network can be decomposed into a series of $k$ transformations, i.e., $f_i = h^k \circ h^{k-1} \circ \dots \circ h^1$, then Equation~\ref{eq:spectral_norm_bound} holds (where $||\cdot||$ is the spectral norm).
\begin{equation}\label{eq:spectral_norm_bound}
    K_i \leq \prod_{j=1}^{k}{||h^j||}
\end{equation}
In the case of a CNN consisting of convolutional layers, dense layers, and ReLU activations, we use $1$ for the spectral norm of each of the ReLU layers, and we use the power method~\cite{farnia2018generalizable,Gouk2021} to compute the spectral norm of the convolutional and dense layers.
\citeauthor{Gouk2021} also give a procedure for bounding the spectral norm of skip connections and batch normalization layers, enabling this approach on ResNet architectures.
For more complicated networks, there is a growing body of work on computing layer-wise Lipschitz bounds for various types of layers that are commonly used in neural networks~\cite{zou2019lipschitz,fazlyab2019efficient,sedghi2018the,singla19arxiv,miyato2018spectral}.

The power method may need several iterations to converge; however, we can reduce the number of iterations required at each training step by persisting the state of the power method iterates across steps.
While this optimization may not guarantee an upper bound, this fact is inconsequential so long as we still obtain a model that can be certified with a true upper bound that is computed after training; this is actually not unreasonable to expect, presuming the underlying model parameters do not change too quickly.
With a small number of iterations, the additional memory required to compute the Lipschitz constant via this method is approximately the same as to run the network on a single instance.

At test time, the power method must be run to convergence; however, after training, the global Lipschitz bound will remain unchanged and therefore it can be computed 
once in advance. This means that new points can be certified with \emph{no additional non-trivial overhead}.

\paragraph{$\ell_\infty$ Bounds.}
While in this work, we focus on the $\ell_2$ norm, the ideas presented in Section~\ref{sec:method} can be applied to other norms, including the $\ell_\infty$ norm.
However, we find that the analogue of the approximation of the global Lipschitz bound given by Equation~\ref{eq:spectral_norm_bound} is loose in $\ell_\infty$ space.
Meanwhile, a large volume of prior work applies $\ell_\infty$-specific certification strategies that proven effective for $\ell_\infty$ certification~\cite{zhang20crown_ibp,balunovic20colt,Gowal_2019_ICCV}.


\section{Evaluation}\label{sec:eval}
In this section, we present an empirical evaluation of our method.
We first compare GloRo Nets with several certified training methods from the recent literature in Section~\ref{sec:eval:accuracy}.
We also report the training cost, in terms of per-epoch time and peak memory usage, required to train and certify the robustness of our method compared with other competitive approaches (Section~\ref{sec:eval:cost}). 
We end by demonstrating the relative tightness of the estimated Lipschitz bounds for GloRo Nets in Section~\ref{sec:eval:tightness}.
We compare against the KW~\cite{NEURIPS2018_358f9e7b} and BCP~\cite{lee20local_margin} certified training algorithms, which prior work~\cite{lee20local_margin,croce19mmr} reported to achieve the best verified accuracy on MNIST~\cite{lecun2010mnist}, CIFAR-10~\cite{Krizhevsky09learningmultiple} and Tiny-Imagenet~\cite{Le2015TinyIV} relative to other previous certified training methods for $\ell_2$ robustness. 

We train GloRo nets 
to certify robustness against $\ell_2$ perturbations within an $\epsilon$-neighborhood of $0.3$ and $1.58$ for MNIST and $36/255$ for CIFAR-10 and Tiny-Imagenet (these are the $\ell_2$ norm bounds that have been commonly used in the previous literature). 
For each model, we report the \emph{clean accuracy}, i.e., the accuracy without verification on non-adversarial inputs, the \emph{PGD accuracy}, i.e., the accuracy under adversarial perturbations found via the PGD attack~\cite{madry2018towards}, and the \emph{verified-robust accuracy} (VRA), i.e., the fraction of points that are both correctly classified \emph{and} certified as robust. 
For KW and BCP, we report the corresponding best VRAs from the original respective papers when possible, but measure training and certification costs on our hardware for an equal comparison.
We run the PGD attacks using ART~\cite{nicolae2019adversarial} on our models and on any of the models from the prior work for which PGD accuracy is not reported. 
When training BCP models for MNIST with $\epsilon=0.3$, we found a different set of hyperparameters that outperforms those given by \citeauthor{lee20local_margin}.
For GloRo Nets, we found that MinMax activations~\cite{anil19a} performed better than ReLU activations (see Appendix~\ref{appendix:minmax_vs_relu} for more details); for all other models, ReLU activations were used.

Further details on the precise hyperparameters used for training and attacks, the process for obtaining these parameters, and the network architectures are provided in Appendix~\ref{appendix:hyperparams}.
An implementation of our approach is available on GitHub\footnote{Code available at ~{\fontsize{7.5}{11}\selectfont \url{https://github.com/klasleino/gloro}}}.

\begin{figure*}[t]
\centering
\begin{subfigure}[t]{0.62\textwidth}
\resizebox{\textwidth}{!}{%
\begin{tabular}{l|c|ccc|ccc}
\toprule
\textit{method} & Model & Clean (\%) & PGD (\%)& VRA(\%)  & Sec./epoch & \# Epochs & Mem. (MB) \\
\midrule
\multicolumn{8}{c}{\textbf{MNIST} $(\epsilon=0.3)$} \\
\midrule
Standard & \smallmnistmodel &
    99.2 & 96.9 & 0.0 &
    0.3 & 100 & 0.6 \\
GloRo & \smallmnistmodel &
    99.0 &97.8 & \textbf{95.7} &
    0.9 & 500 & 0.7 \\
KW & \smallmnistmodel &
    98.9 & 97.8 & 94.0 &
    66.9 & 100 & 20.2 \\
BCP & \smallmnistmodel &
    93.4 & 89.5 & 84.7 &
    44.8 & 300 & 12.6 \\
\midrule
RS$^*$ & \smallmnistmodel &
    98.8 & \na & 97.4 &
    \na & \na & \na \\
\midrule
\multicolumn{8}{c}{\textbf{MNIST} $(\epsilon=1.58)$} \\
\midrule
Standard & \mnistmodel &
    99.0 & 45.4 & 0.0 &
    0.9 & $\text{42}^\dagger$ & 2.2 \\
GloRo & \mnistmodel &
    97.0 & 81.9 & \textbf{62.8} &
    3.7 & 300 & 2.7 \\
KW & \mnistmodel & 88.1 & 67.9 & 44.5 &
    138.1 & 60 & 84.0 \\
BCP & \mnistmodel &
    92.4 & 65.8 & 47.9 &
    43.4 & 60 & 12.6 \\
\midrule
RS$^*$ & \mnistmodel &
    99.0 & \na & 59.1 &
    \na & \na & \na \\
\midrule
\multicolumn{8}{c}{\textbf{CIFAR-10} $(\epsilon=\nicefrac{36}{255})$} \\
\midrule
Standard & \cifarmodel &
    85.7 & 31.9 & 0.0 &
    1.8 & $\text{115}^\dagger$ & 2.5 \\
GloRo & \cifarmodel &
    77.0 & 69.2 & \textbf{58.4} &
    6.9 & 800 & 3.6 \\
KW & \cifarmodel & 60.1
    & 56.2 & 50.9 &
    516.8 & 60 & 100.9  \\
BCP & \cifarmodel & 
    65.7 & 60.8 & 51.3 &
    47.5 & 200 & 12.7 \\
\midrule
RS$^*$ & \cifarmodel &
    74.1 & \na & 64.2 &
    \na & \na & \na \\
\midrule
\multicolumn{8}{c}{\textbf{Tiny-Imagenet} $(\epsilon=\nicefrac{36}{255})$} \\
\midrule
Standard & \tinyimagenetmodel & 35.9
    & 19.4&0.0 &
    10.7 & $\text{58}^\dagger$ & 6.7 \\
GloRo & \tinyimagenetmodel & 35.5
    & 32.3 & \textbf{22.4} &
    40.3 & 800 & 10.4 \\
KW & \na & \na & \na & \na & \na & \na & \na \\
BCP & \tinyimagenetmodel & 28.8 & 26.6 & 20.1 & 798.8 & 102 & 715.2 \\
\midrule
RS$^*$ & \tinyimagenetmodel &
    23.4 & \na & 16.9 &
    \na & \na & \na \\
\bottomrule
\end{tabular}}
\caption{}\label{fig:eval:train-results}
\end{subfigure}
\hspace{0.1em}
\begin{subfigure}[t]{0.35\textwidth}
\vspace{-12em}
\resizebox{\textwidth}{!}{%
\begin{tabular}{l|c|cc}
\toprule
\textit{method} & Model & Time (sec.) & Mem. (MB)\\
\midrule
GloRo & \cifarmodel & 0.4 & 1.8 \\
KW & \cifarmodel & 2,515.6 & 1,437.5 \\
BCP & \cifarmodel & 5.8 & 19.1 \\
\midrule
RS$^*$ & \cifarmodel & 36,845.5 & 19.8 \\
\bottomrule
\end{tabular}
}
\caption{}\label{fig:eval:cert-results}
\vspace{3em}
\resizebox{\textwidth}{!}{%
\begin{tabular}{l|c|cc}
\toprule
\textit{method} \hspace{1em} & global UB & global LB & local LB \\
\midrule
\multicolumn{4}{c}{\textbf{MNIST} $(\epsilon=1.58)$} \\
\midrule
Standard & $5.4\cdot10^4$ & $1.4\cdot10^2$ & $17.1$ \\
GloRo    & $2.3$          & $1.9$          & $0.8$ \\
\midrule
\multicolumn{4}{c}{\textbf{CIFAR-10} $(\epsilon=\nicefrac{36}{255})$} \\
\midrule
Standard & $1.2\cdot10^7$ & $1.1\cdot10^3$ & $96.2$ \\
GloRo    & $15.8$         & $11.0$         & $3.7$ \\
\midrule
\multicolumn{4}{c}{\textbf{Tiny-Imagenet} $(\epsilon=\nicefrac{36}{255})$} \\
\midrule
Standard & $2.2\cdot10^7$ & $3.6\cdot10^2$ & $40.7$ \\
GloRo    & $12.5$         & $5.9$          & $0.8$ \\
\bottomrule
\end{tabular}
}
\caption{}\label{fig:eval:lipschitz-results}
\end{subfigure}
\caption{
\textbf{(\subref{fig:eval:train-results})}
Certifiable training evaluation results on benchmark datasets. Best results are highlighted in bold.
Randomized Smoothing (RS) is marked with a * superscript to indicate that it provides only a \emph{stochastic} robustness guarantee.
Training cost for RS is omitted as it essentially post-processes standard-trained models (see Appendix~\ref{appendix:memory_usage} for more details).
A $\dagger$ superscript on the number of epochs denotes that an early-stop callback was used to determine convergence.
\textbf{(\subref{fig:eval:cert-results})}
Certification timing and memory usage results on CIFAR-10 ($\epsilon=\nicefrac{36}{255}$).
\textbf{(\subref{fig:eval:lipschitz-results})}
Upper and lower bounds on the global and average local Lipschitz constant.
In (\subref{fig:eval:train-results}) and (\subref{fig:eval:cert-results}), peak GPU Memory usage is calculated per-instance by dividing the total measurement by the training or certification batch size.
}
\end{figure*}

\subsection{Verified Accuracy}\label{sec:eval:accuracy}

We first compare the VRA obtained by GloRo Nets to the VRA achieved by prior deterministic approaches.
KW and BCP have been found to achieve the best VRA on the datasets commonly used in the previous literature~\cite{lee20local_margin,croce19mmr}.
In Appendix~\ref{appendix:comprehensive_vra} we provide a more comprehensive comparison to the VRAs that have been reported in prior work.

Figure~\ref{fig:eval:train-results} gives the best VRA achieved by standard training, GloRo Nets, KW, and BCP on several benchmark datasets and architectures.
In accordance with prior work, we include the clean accuracy and the PGD accuracy as well.
Whereas the VRA gives a lower bound on the number of correctly-classified points that are locally robust, the PGD accuracy serves as an upper bound on the same quantity.
We also provide the (probabilistic) VRA achieved via Randomized Smoothing (RS)~\cite{cohen19smoothing} on each of the datasets in our evaluation, as a comparison to stochastic certification.

We find that GloRo Nets consistently outperform the previous state-of-the-art deterministic VRA.
On MNIST, GloRo Nets outperform all previous approaches with both $\ell_2$ bounds commonly used in prior work ($\epsilon = 0.3$ and $\epsilon = 1.58$).
When $\epsilon = 0.3$, the VRA begins to approach the clean accuracy of the standard-trained model; for this bound, GloRo Nets outperform the previous best VRA (achieved by KW) by nearly two percentage points, accounting for roughly $33\%$ of the gap between the VRA of KW and the clean accuracy of the standard model.
For $\epsilon = 1.58$, GloRo Nets improve upon the previous best VRA (achieved by BCP) by approximately $15$ percentage points---in fact, the VRA achieved by GloRo Nets in this setting even slightly exceeds that of Randomized Smoothing, despite the fact that RS provides only a stochastic guarantee.
On CIFAR-10, GloRo Nets exceed the best VRA (achieved by BCP) by approximately $7$ percentage points.
Finally, on Tiny-Imagenet, GloRo Nets outperform BCP by approximately $2$ percentage points, improving the state-of-the-art VRA by roughly $10\%$.
KW was unable to scale to Tiny-Imagenet due to memory pressure. 

The results achieved by GloRo Nets in Figure~\ref{fig:eval:train-results} are achieved using MinMax activations~\cite{anil19a} rather than ReLU activations, as we found MinMax activations provide a substantial performance boost to GloRo Nets.
We note, however, that both KW and BCP tailor their analysis specifically to ReLU activations, meaning that they would require non-trivial modifications to support MinMax activations.
Meanwhile, GloRo Nets easily support generic activation functions, provided the Lipschitz constant of the activation can be bounded (e.g., the Lipschitz constant of a MinMax activation is $1$).
Moreover, even with ReLU activations, GloRo Nets outperform or match the VRAs of KW and BCP; Appendix~\ref{appendix:minmax_vs_relu} provides these results for comparison.

\subsection{Training and Certification Cost}\label{sec:eval:cost}

A key advantage to GloRo Nets over prior approaches is their ability to achieve state-of-the-art VRA (see Section~\ref{sec:eval:accuracy}) using a global Lipschitz bound.
As discussed in Section~\ref{sec:local_bounds}, this confers performance benefits---both at train and test time---over using a local bound (e.g., BCP), or other expensive approaches (e.g., KW).

Figure~\ref{fig:eval:train-results} shows the cost of each approach both in time per epoch and in memory during training (results given for CIFAR-10).
All timings were taken on a machine using a Geforce RTX 3080 accelerator, 64 GB memory, and Intel i9 10850K CPU, with the exception of those for the KW~\cite{NEURIPS2018_358f9e7b} method, which were taken on a Titan RTX card for toolkit compatibility reasons.
Appendix~\ref{appendix:memory_usage} provides further details on how memory usage was measured.
Because different batch sizes were used to train and evaluate each model, we control for this by reporting the memory used \emph{per instance in each batch}.
The cost for standard training is included for comparison.
The training cost of RS is omitted, as RS does not use a specialized training procedure, and is thus comparable to standard training.
Appendix~\ref{appendix:memory_usage} provides more information on this point.

We see that KW is the most expensive approach to train, requiring tens to hundreds of seconds per epoch and roughly $35\times$ more memory per batch instance than standard training.
BCP is less expensive than KW, but still takes nearly one minute per epoch on MNIST and CIFAR and 15 minutes on Tiny-Imagenet, and uses anywhere between $5$-$106\times$ more memory than standard training.

Meanwhile, the cost of GloRo Nets is more comparable to that of standard training than of KW or BCP, taking only a few seconds per epoch, and at most $50\%$ more memory than standard training.
Because of its memory scalability, we were able to use a larger batch size with GloRo Nets.
As a result, more epochs were required during training however, this did not outweigh the significant reduction in time per epoch, as the total time for training was still only at most half of the total time for BCP.

Figure~\ref{fig:eval:cert-results} shows the cost of each approach both in the time required to certify the entire test set and in the memory used to do so (results given for CIFAR-10).
KW is the most expensive deterministic approach in terms of time and memory, followed by BCP.
Here again, GloRo Nets are far superior in terms of cost, making certified predictions over $14\times$ faster than BCP with less than a tenth of the memory, and over 6,000$\times$ faster than KW.
We thus conclude that GloRo Nets are the most scalable state-of-the-art technique for robustness certification.

As reported in Figure~\ref{fig:eval:train-results}, Randomized Smoothing typically outperforms the VRA achieved by GloRo Nets, and is also inexpensive to train; though the VRA achieved by RS reflects a stochastic guarantee rather than a deterministic one.
However, we see in Figure~\ref{fig:eval:cert-results} that GloRo Nets are \emph{several orders of magnitude} faster at certification than RS.
GloRo Nets perform certification in a single forward pass, enabling certification of the entire CIFAR-10 test set in \emph{under half a second};
on the other hand, RS requires tens of thousands of samples to provide confident guarantees, reducing throughput by orders of magnitude and requiring over \emph{ten hours} to certify the same set of instances.

\subsection{Lipschitz Tightness}\label{sec:eval:tightness}

Theorem~\ref{thm:global_vs_local} demonstrates that a global Lipschitz bound is theoretically sufficient for certifying $2\epsilon$-separated data.
However, as discussed in Section~\ref{sec:local_bounds}, there may be several practical limitations making it difficult to realize a network satisfying Theorem~\ref{thm:global_vs_local}; we now assess how these limitations are borne out in practice by examining the Lipschitz bounds that GloRo Nets use for certification.

\citet{weng18fastlip} report that an upper bound on the global Lipschitz constant is not capable of certifying robustness for a non-trivial radius.
While this is true of models produced via \emph{standard training}, GloRo Nets impose a strong implicit regularization on the global Lipschitz constant.
Indeed, Figure~\ref{fig:eval:lipschitz-results} shows that the global upper bound is several orders of magnitude smaller on GloRo Nets than on standard networks.

Another potential limitation of using an upper bound of the global Lipschitz constant is the bound itself~\cite{huster2018limitations}.
Figure~\ref{fig:eval:lipschitz-results} shows that a lower bound of the Global Lipschitz constant, obtained via optimization, reaches an impressive $83\%$ of the upper bound on MNIST, meaning that the upper bound is fairly tight. 
On CIFAR-10 and Tiny-Imagenet the lower bound reaches approximately $70\%$ and $47\%$ of the upper bound, respectively.
However, on a standard model, the lower bound is potentially orders of magnitude looser.
These results show there is still room for improvement; for example, using the lower bound in place of the upper bound would lead to roughly a $10\%$ increase in VRA on CIFAR-10, from $58\%$ to $64\%$.
However, the fact that the bound is tighter for GloRo Nets suggests the objective imposed by the GloRo Net helps by incentivizing parameters for which the upper bound estimate is sufficiently tight for verification.

Finally, we compare the global upper bound to an empirical lower bound of the local Lipschitz constant.
The local lower bound given in Figure~\ref{fig:eval:lipschitz-results} reports the \emph{mean} local Lipschitz constant found via optimization in the $\epsilon$-balls centered at each of the test points.
In the construction given for the proof of Theorem~\ref{thm:global_vs_local}, the local Lipschitz constant is the same as the global bound at all points.
While the results in Figure~\ref{fig:eval:lipschitz-results} show that this may not be entirely achieved in practice, the ratio of the local lower bound to the global upper bound is essentially zero in the standard models, compared to $6$-$35\%$ in the GloRo Nets, establishing that the upper bound is again much tighter for GloRo Nets.
Still, this suggests that a reasonably tight estimate of the local bound may yet help improve the VRA of a GloRo Net at runtime, although this is a challenge in its own right.
Intriguingly, GloRo Nets outperform BCP, which utilizes a \emph{local} Lipschitz bound for certification at train and test time, suggesting that GloRo Nets provide a better objective for certifiable robustness despite using a looser bound during training.

We provide further discussion of the upper and lower bounds, and details for how the lower bounds were obtained in Appendix~\ref{appendix:lower_bounds}.


\vspace{0.80em}
\section{Related Work}\label{sec:related}

Utilizing the Lipschitz constant to certify robustness has been studied in several instances of prior work.
On discovering the existence of adversarial examples, \citet{SzegedyZSBEGF13} analyzed the sensitivity of neural networks using a global Lipschitz bound, explaining models' ``blind spots'' partially in terms of large bounds and suggesting Lipschitz regularization as a potential remedy.
\citet{huster2018limitations} noted the potential limitations of using global bounds computed layer-wise according to Equation~\ref{eq:spectral_norm_bound}, and showed experimentally that direct regularization of the Lipschitz constant by penalizing the weight norms of a two-layer network yields subpar results on MNIST.
While Theorem~\ref{thm:global_vs_local} does not negate their concern, as it may not always be feasible to compute a tight enough bound using Equation~\ref{eq:spectral_norm_bound}, our experimental results show to the contrary that global bounds can suffice to produce models with at least comparable utility to several more expensive and complicated techniques.
More recently, \citet{yang20acc_rob_tradeoff} showed that robustness and accuracy need not be at odds on common benchmarks when locally-Lipschitz functions are used, and call for further investigation of methods that impose this condition while promoting generalization.
Our results show that globally-Lipschitz functions, which bring several practical benefits (Section~\ref{sec:local_bounds}), are a promising direction as well.

Lipschitz constants have been applied previously for fast post-hoc certification~\cite{weng18fastlip,hein2017lipschitz,weng2018evaluating}.
While our work relies on similar techniques, our exclusive use of the global bound means that no additional work is needed at inference time.
Additionally, we apply this certification only to networks that have been optimized for it.

There has also been prior work seeking to use Lipschitz bounds, or close analogues, during training to promote robustness~\cite{tsuzuku18margin,raghunathan2018certified,cisse17a,cohen2019universal,anil19a,pauli21control,qin19locallin,finlay_2019_scaleable,lee20local_margin,Gouk2021,singla19arxiv,farnia2018generalizable}.
\citet{cisse17a} introduced Perseval networks, which enforce contractive Lipschitz bounds on all layers by orthonormalizing their weights.
\citet{anil19a} proposed replacing ReLU activations with sorting activations to construct a class of \emph{universal Lipschitz approximators}, that that can approximate any Lipschitz-bounded function over a given domain, and \citet{cohen2019universal} subsequently studied the application to robust training; these advances in architecture complement our work, as noted in Appendix~\ref{appendix:minmax_vs_relu}.

The closest work in spirit to ours is Lipschitz Margin Training (LMT)~\cite{tsuzuku18margin}, which also uses global Lipschitz bounds to train models that are more certifiably robust.
The approach works by constructing a loss that adds $\sqrt{2}\epsilon K_G$ to all logits other than that corresponding to the ground-truth class.
Note that this is different from GloRo Nets, which add a \emph{new logit} defined by the \emph{predicted} class at $x$.
In addition to providing different gradients than those of LMT's loss, our approach avoids penalizing logits corresponding to boundaries distant from $x$.
In practical terms, \citet{lee20local_margin} showed that LMT yields lower verified accuracy than more recent methods that use local Lipschitz bounds~\cite{lee20local_margin} or dual networks~\cite{NEURIPS2018_358f9e7b}, while Section~\ref{sec:eval:accuracy} shows that our approach can provide greater verified accuracy than either.
LMT's use of global bounds means its cost is comparable to our approach.

More recently, \citet{lee20local_margin} explored the possibility of training networks against local Lipschitz bounds, motivated by the fact that the global bound may vastly exceed a typical local bound on some networks.
They showed that a localized refinement of the global spectral norm of the network offers a reasonable trade-off of precision for cost, and were able to achieve competitive, and in some cases superior, verified accuracy to prior work.
Theorem~\ref{thm:global_vs_local} shows that in principle, the difference in magnitude between local and global bounds may not matter for robust classification.
Moreover, while it is true that the bounds computed by Equation~\ref{eq:spectral_norm_bound} may be loose on some models, our experimental results suggest that it is possible in many cases to mitigate this limitation by training against a global bound with the appropriate loss.
The advantages of doing so are apparent in the cost of both training and certification, where the additional overhead involved with computing tighter local bounds is an impediment to scalability.

Finally, several other methods have been proposed for training $\ell_2$-certifiable networks that are not based on Lipschitz constants.
For example, \citet{wong2017provable} use an LP-based approach that can be optimized relatively efficiently using a \emph{dual network}, \citet{croce19mmr} and \citet{madry2018towards} propose training routines based on maximizing the size of the linear regions within a network, and \citet{mirman18diffai} propose a method based on abstract interpretation.

\paragraph{Randomized Smoothing.}
The certification methods discussed thus far provide \emph{deterministic} robustness guarantees.
By contrast, another recent approach, Randomized Smoothing~\cite{cohen19smoothing,lecuyer18smoothing}, provides \emph{stochastic} guarantees---that is, points are certified as \emph{robust with high probability} (i.e., the probability can be bounded from below).
Randomized Smoothing has been found to achieve better VRA performance than any deterministic certification method, including GloRo Nets.
However, GloRo Nets compare favorably to Randomized Smoothing in a few key ways.

First, the fact that GloRo Nets provide a deterministic guarantee is an advantage in and of itself.
In safety-critical applications, it may not be considered acceptable for a small fraction of adversarial examples to go undetected;
meanwhile, Randomized Smoothing is typically evaluated with a false positive rate around $0.1\%$~\cite{cohen19smoothing}, meaning that instances of incorrectly-certified points are to be expected in validation sets with thousands of points.

Furthermore, as demonstrated in Section~\ref{sec:eval:cost}, GloRo Nets have far superior run-time cost.
Because Randomized Smoothing does not explicitly represent the function behind its robust predictions, points must be evaluated and certified using as many as 100,000 samples~\cite{cohen19smoothing}, reducing throughput by several orders of magnitude.
Meanwhile, GloRo Nets can certify a batch of points in \emph{a single forward pass}.


\section{Conclusion}\label{sec:conclusion}

In this work, we provide a method for training certifiably-robust neural networks that is simple, fast, memory-efficient, and that yields state-of-the-art deterministic verified accuracy.
Our approach is particularly efficient because of its effective use of global Lipschitz bounds, and while we prove that the potential of our approach is in theory not limited by the global Lipschitz constant itself, it remains an open question as to whether our bounds on the Lipschitz constant can be tightened, or if additional training techniques can help unlock its remaining potential.
Finally, we note that if instances arise where a global bound is not sufficient in practice, costlier post-hoc certification techniques may be complimentary, as a fall-back.

\section*{Acknowledgments}
The work described in this paper has been supported by the Software Engineering Institute under its FFRDC Contract No. FA8702-15-D-0002 with the U.S. Department of Defense, as well as DARPA and the Air Force Research Laboratory under agreement number FA8750-15-2-0277.

\bibliography{grib}
\bibliographystyle{icml2020}

\clearpage

\appendix


\setcounter{figure}{0}
\setcounter{table}{0}
\setcounter{equation}{2}
\setcounter{definition}{3} 
\setcounter{theorem}{3} 
\numberwithin{figure}{section}
\numberwithin{table}{section}

\twocolumn[
    \icmltitle{
        Appendix
    }



    \icmlsetsymbol{equal}{*}
]

\section{Proofs}

\subsection{Proof of Theorem
\ref{thm:local_robustness}
}\label{proof:local_robustness}

\paragraph{Theorem~\ref{thm:local_robustness}.}
\textit{%
If $\margin F(x) \neq \bot$, then $\margin F(x) = F(x)$ and $F$ is $\epsilon$-locally-robust at $x$.
}


\begin{proof}
Let $j = F(x)$.
Assume that $\margin F(x) \neq \bot$; 
this happens only if one of the outputs of $f$ is greater than $\margin f_\bot(x)$ --- from the definition of $f_\bot(x)$, it is clear that only $f_j(x)$ can be greater than $\margin f_\bot(x)$.
Therefore $f_j(x) > \margin f_\bot(x)$, and so $\margin F(x) = j = F(x)$.

Now assume $x'$ satisfies $||x - x'|| \leq \epsilon$.
Let $K_i$ be an upper bound on the Lipschitz constant of $f_i$.
Then, $\forall i$
\begin{align}
    & \frac{|f_i(x) - f_i(x')|}{\epsilon} \leq 
    \frac{|f_i(x) - f_i(x')|}{||x - x'||} \leq
    K_i \nonumber \\
    \Longrightarrow~~& |f_i(x) - f_i(x')| \leq K_i \epsilon 
    \label{step:apply_lipschitz}
\end{align}
We proceed to show that for any such $x'$, $F(x')$ is also $j$.
In other words, $\forall i \neq j$, $f_i(x') < f_j(x')$.
By applying the definition of the Lipschitz constant as in (\ref{step:apply_lipschitz}), we obtain (\ref{step:i_to_K_eps}). 
Next, (\ref{step:to_f_bot}) follows from the fact that $\margin f_\bot(x) = \max_{i\neq j}\left\{y_i + (K_i + K_j)\epsilon\right\}$.
We then obtain (\ref{step:apply_prediction}) from the fact that $f_j(x) > \margin f_\bot(x)$, as observed above.
Finally, we again apply (\ref{step:apply_lipschitz}) to obtain (\ref{step:K_eps_to_j}).
\begin{align}
    f_i(x') 
    & \leq f_i(x) + |f_i(x) - f_i(x')| \leq f_i(x) + K_i \epsilon 
    \label{step:i_to_K_eps} \\
    & \leq \margin f_\bot(x) - K_j \epsilon 
    \label{step:to_f_bot} \\
    & < f_j(x) - K_j \epsilon 
    \label{step:apply_prediction} \\
    & \leq f_j(x) - |f_j(x) - f_j(x')| \leq f_j(x')
    \label{step:K_eps_to_j}
\end{align}

Therefore, $f_i(x') < f_j(x')$, and so $F(x') = j$.
This means that $F$ is locally robust at $x$.
\end{proof}

\subsection{Tighter Bounds for Theorem
\ref{thm:local_robustness}
}\label{proof:tighter_bound}

Note that in the formulation of GloRo Nets given in Section~\ref{sec:gloro_nets}, we assume that the predicted class, $j$, will decrease by the maximum amount within the $\epsilon$-ball, while all other classes increase by their respective maximum amounts. 
This is a conservative assumption that guarantees local robustness;
however, in practice, we can dispose of this assumption by instead calculating the Lipschitz constant of the margin by which the logit of the predicted class surpasses the other logits, $f_j - f_i$.

The \emph{margin Lipschitz constant} of $f$, defined for a pair of classes, $i\neq j$, is given by Definition~\ref{def:margin_lip}.
\begin{definition}{Margin Lipschitz Constant}\label{def:margin_lip}
For network, $f : \mathds{R}^n \rightarrow \mathds{R}^m$, and classes $i\neq j \in [m]$, $K^*_{ij}$ is an upper bound on the margin Lipschitz constant of $f$ if $\forall x_1, x_2$,
$$
\frac{|f_j(x_1) - f_i(x_1) - (f_j(x_2) - f_i(x_2))|}{||x_1 - x_2||} \leq K^*_{ij}
$$
\end{definition}

We now define a variant of GloRo Nets (Section~\ref{sec:gloro_nets}) as follows:
For input, $x$, let $j = F(x)$, i.e., $j$ is the label assigned by the underlying model to be instrumented.
Define $\margin f_i(x) ::= f_i(x)$, and $\margin f_\bot(x) ::= \max_{i\neq j}\{f_i(x) + \epsilon K^*_{ij}\}$.

\begin{theorem}\label{thm:gloro_variant}
Under this variant, if $\margin F(x) \neq \bot$, then $\margin F(x) = F(x)$ and $F$ is $\epsilon$-locally-robust at $x$.
\end{theorem}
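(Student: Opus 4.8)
The plan is to mirror the proof of Theorem~\ref{thm:local_robustness} almost verbatim, changing only one thing: rather than bounding the logits $f_i$ and $f_j$ separately (which costs $\epsilon K_i + \epsilon K_j$), I would track the \emph{margin function} $g_{ij} \coloneqq f_j - f_i$ as a single function whose Lipschitz constant is bounded by $K^*_{ij}$, so that only the single term $\epsilon K^*_{ij}$ ever enters the argument. Concretely, I would first show $\margin F(x) = F(x)$ and then show $\epsilon$-local robustness, exactly as in the earlier proof.

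For the first part, I would set $j = F(x)$ and observe that since $\epsilon \ge 0$ and each $K^*_{ij} \ge 0$, we have $\margin f_\bot(x) = \max_{i\neq j}\{f_i(x) + \epsilon K^*_{ij}\} \ge f_i(x) = \margin f_i(x)$ for every $i \neq j$. Hence the only output of $\margin f$ that can exceed $\margin f_\bot(x)$ is $\margin f_j(x) = f_j(x)$; since the hypothesis $\margin F(x)\neq\bot$ means some output does exceed it, it follows that $f_j(x) > \margin f_\bot(x)$ and $\margin F(x) = j = F(x)$.

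For local robustness, I would fix an arbitrary $x'$ with $\|x - x'\| \le \epsilon$ and an arbitrary $i \neq j$, and apply Definition~\ref{def:margin_lip} to $g_{ij}$ to get $|g_{ij}(x) - g_{ij}(x')| \le K^*_{ij}\|x - x'\| \le \epsilon K^*_{ij}$, hence $g_{ij}(x') \ge g_{ij}(x) - \epsilon K^*_{ij}$. Combining this with $f_j(x) > \margin f_\bot(x) \ge f_i(x) + \epsilon K^*_{ij}$ (i.e.\ $g_{ij}(x) > \epsilon K^*_{ij}$) from the first part gives $g_{ij}(x') > 0$, i.e.\ $f_j(x') > f_i(x')$. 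Since $i\neq j$ was arbitrary, $F(x') = j = F(x)$, which is the claimed $\epsilon$-local robustness at $x$.

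There is no real obstacle here: the proof is a one-line refinement of Theorem~\ref{thm:local_robustness}, and its entire content is in \emph{not} splitting the margin into two independently-bounded pieces. The only points requiring care are the nonnegativity facts used in the first part (so the $\bot$ logit cannot spuriously overtake a legitimate runner-up logit) and the fact that $j$ is the $\text{argmax}$ of $f$ at $x$, so no other logit can itself exceed $\margin f_\bot(x)$. It is worth noting in passing that a standard subadditivity argument gives $K^*_{ij} \le K_i + K_j$, so this variant's $\bot$ threshold is never above the one used in Theorem~\ref{thm:local_robustness}; thus the non-$\bot$ region can only enlarge, while the argument above shows correctness is preserved.
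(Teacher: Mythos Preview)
Your proposal is correct and essentially identical to the paper's proof: both establish $\margin F(x)=j$ by noting that $\margin f_\bot(x)\ge f_i(x)$ for all $i\neq j$, and both obtain local robustness by applying the margin Lipschitz bound $|g_{ij}(x)-g_{ij}(x')|\le\epsilon K^*_{ij}$ together with $f_j(x)>\margin f_\bot(x)\ge f_i(x)+\epsilon K^*_{ij}$ to conclude $f_j(x')>f_i(x')$. Your explicit introduction of $g_{ij}=f_j-f_i$ and the closing remark that $K^*_{ij}\le K_i+K_j$ are minor expository additions, not substantive differences.
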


\begin{proof}
The proof is similar to the proof of Theorem~\ref{thm:local_robustness} (Appendix~\ref{proof:local_robustness}).
Let $j = F(x)$.
As before, when $\margin F(x) \neq \bot$, we see that $\margin F(x) = j = F(x)$.

Now assume $x'$ satisfies $||x - x'|| \leq \epsilon$.
Let $K^*_{ij}$ be an upper bound on the margin Lipschitz constant.
Then, $\forall i$
\begin{equation}\label{step:apply_m_lipschitz}
    |f_j(x) - f_i(x) - (f_j(x') - f_i(x'))| \leq K^*_{ij} \epsilon
\end{equation}

We proceed to show that for any such $x'$, $F(x')$ is also $j$.
In other words, $\forall i \neq j$, $f_i(x') < f_j(x')$.
By applying (\ref{step:apply_m_lipschitz}), we obtain (\ref{step:to_Kij_eps}). 
Next, (\ref{step:to_f_bot_m}) follows from the fact that $\margin f_\bot(x) = \max_{i\neq j}\left\{f_i(x) + K^*_{ij}\epsilon\right\}$.
We then obtain (\ref{step:apply_prediction_m}) from the fact that $f_j(x) > \margin f_\bot(x)$, as $\margin F(x) = j \neq \bot$.
\begin{align}
         &\cancel{f_i(x)} + f_j(x) - \cancel{f_i(x)} - f_j(x') + f_i(x') \nonumber \\
    \leq~& f_i(x) + |f_j(x) - f_i(x) - (f_j(x') - f_i(x'))| \nonumber \\
    \leq~& f_i(x) + K^*_{ij} \epsilon \label{step:to_Kij_eps}\\
    \leq~& \margin f_\bot(x) \label{step:to_f_bot_m} \\
       <~& f_j(x) \label{step:apply_prediction_m}
\end{align}
Rearranging terms, we obtain that $f_i(x') < f_j(x')$. Thus, $F(x') = j$; this means that $F$ is locally robust at $x$.
\end{proof}

\subsection{Proof of Theorem
\ref{thm:global_robustness}
}\label{proof:global_robustness}

\paragraph{Theorem~\ref{thm:global_robustness}.}
\textit{%
$\margin[\nicefrac{\epsilon}{2}]{F}$ is $\epsilon$-globally-robust.
}


\begin{proof}
Assume $x_1$ and $x_2$ satisfy $||x_1 - x_2|| \leq \epsilon$.
Let $\margin[\nicefrac{\epsilon}{2}]{F}(x_1) = c_1$ and $\margin[\nicefrac{\epsilon}{2}]{F}(x_2) = c_2$.

If $c_1 = \bot$ or $c_2 = \bot$, global robustness is trivially satisfied.

Consider the case where $c_1 \neq \bot$, $c_2 \neq \bot$.
Let $x'$ be the midpoint between $x_1$ and $x_2$, i.e., $x' = (x_1 + x_2) / 2$.
Thus
\begin{align*}
    & ||x_1 - x'|| = \left|\left|\frac{x_1 - x_2}{2}\right|\right| = \frac{||x_1 - x_2||}{2} \leq \frac{\epsilon}{2}.
\end{align*}
By Theorem~\ref{thm:local_robustness}, this implies $F(x') = c_1$.
By the same reasoning, $||x_2 - x'|| \leq \nicefrac{\epsilon}{2}$, implying that $F(x') = c_2$. Thus, $c_1 = c_2$, so global robustness holds.
\end{proof}

\subsection{Proof of Theorem
\ref{thm:global_vs_local}
}\label{proof:global_vs_local}

\paragraph{Theorem~\ref{thm:global_vs_local}.}
\textit{%
Let $f$ be a binary classifier that predicts $1 \Longleftrightarrow f(x) > 0$.
Let $K_L(x, \epsilon)$ be the local Lipschitz constant of $f$ at point $x$ with radius $\epsilon$.
}

\textit{%
Suppose that for some finite set of points, $S$, $\forall x \in S$, $|f(x)| > \epsilon K_L(x, \epsilon)$, i.e., all points in $S$ can be verified via the local Lipschitz constant.
}

\textit{%
Then there exists a classifier, $g$, with global Lipschitz constant $K_G$, such that $\forall x\in S$,
(1) 
$g$ makes the same predictions as $f$ on $S$, and (2) $|g(x)| > \epsilon K_G$, i.e., all points in $S$ can be verified via the global Lipschitz constant.
}

\begin{figure}[t]
\centering
\includegraphics[page=1, height=0.5\columnwidth]{figures/proof_sketch.pdf}
\caption{Illustration of a function, $g$, constructed to satisfy Theorem~\ref{thm:global_vs_local}. The points in $S$ are shown in light and dark gray, with different shades indicating different labels. The Voronoi tessellation is outlined in black, and the faces belonging to the decision boundary are highlighted in bold. The level curves of $g$ are shown in various shades of gray and correspond to points, $x$, at some fixed distance, $d(x)$, from the decision boundary.}
\label{fig:proof_sketch}
\end{figure}


\begin{proof}
Let $T$ be the Voronoi tessellation generated by the points in $S$.
Each Voronoi cell, $C_j\in T$, corresponds to the set of points that are closer to $p_j\in S$ than to any other point in $S$; and the face, $F_{ij}\in T$, which separates cells $C_i$ and $C_j$, corresponds to the set of points that are equidistant from $p_i$ and $p_j$.

Let $B = \left\{F_{ij} : \text{sign}(f(p_i)) \neq \text{sign}(f(p_j))\right\}$, i.e., the set of faces in the Voronoi tessellation that separate points that are classified differently by $f$ (note that $B$ corresponds to the boundary of the 1-nearest-neighbor classifier for the points in $S$).

Consider a point, $x$.
Let $p_x\in S$ be the closest point in $S$ to $x$, i.e., the point corresponding to the Voronoi cell containing $x$.
Let $d(x) = ||\proj{x}{B} - x||$; that is, $d(x)$ is the minimum distance from $x$ to any point in any of the faces in $B$.
Then define
$$
g(x) = \text{sign}\big(f(p_x)\big)\frac{d(x)}{\epsilon}
$$
First, observe that $g(x) > 0 \Longleftrightarrow f(x) > 0$ follows from the fact that $d(x)$ and $\epsilon$ are non-negative, thus the sign of $g(x)$ is derived from the sign of $f(x)$.

Next, we show that the global Lipschitz constant of $g$, $K_G$, is at most $\nicefrac{1}{\epsilon}$, that is, $\forall x_1, x_2$,
$$
\frac{|g(x_1) - g(x_2)|}{||x_1 - x_2||} \leq \frac{1}{\epsilon}
$$
Consider two points, $x_1$ and $x_2$, and let $p_1$ and $p_2$ be the points in $S$ corresponding to the respective Voronoi cells of $x_1$ and $x_2$.

First, consider the case where $\text{sign}(f(p_1)) \neq \text{sign}(f(p_2))$, i.e., $x_1$ and $x_2$ are on opposite sides of the boundary, $B$.
In this case $|g(x_1) - g(x_2)| = \nicefrac{(d(x_1) + d(x_2))}{\epsilon}$, and thus it suffices to show that $d(x_1) + d(x_2) \leq ||x_1 - x_2||$.

Assume for the sake of contradiction that $d(x_1) + d(x_2) > ||x_1 - x_2||$.
Note that because $x_1$ and $x_2$ belong in Voronoi cells with different classifications from $f$, the line segment connecting $x_1$ and $x_2$ must cross the boundary, $B$, at some point $c$.
Therefore, $||x_1 - c|| + ||x_2 - c|| = ||x_1 - x_2|| < d(x_1) + d(x_2)$; without loss of generality, this implies that $||x_1 - c|| < d(x_1)$.
But since $c \in F \in B$, this contradicts that $d(x_1)$ is the minimum distance from $x_1$ to $B$.\lightning

Next, consider the case where $\text{sign}(f(p_1)) = \text{sign}(f(p_2))$.
In this case $|g(x_1) - g(x_2)| = \nicefrac{|d(x_1) - d(x_2)|}{\epsilon}$, and thus, without loss of generality, it suffices to show that $d(x_1) - d(x_2) \leq ||x_1 - x_2||$.

Assume for the sake of contradiction that $d(x_1) - d(x_2) > ||x_1 - x_2||$.
Thus $d(x_1) > ||x_1 - x_2|| + d(x_2)$.
However, this suggests that we can take the path from $x_1$ to $x_2$ to $B$ with a smaller total distance than $d(x_1)$, contradicting that $d(x_1)$ is the minimum distance from $x_1$ to $B$.\lightning

We now show that $\forall p\in S$, $|g(p)| \geq 1$, i.e., $d(p) \geq \epsilon$.
In other words, we must show that the distance of any point, $p\in S$ to the boundary, $B$, is at least $\epsilon$.
Consider a point, $x$, on some face, $F_{ij}\in B$.
This point is equidistant from $p_i$ and $p_j\in S$, on which $f$ makes different predictions; and every other point in $S$ is at least as far from $x$ as $p_i$ and $p_j$.
I.e., $||p_i - x|| = ||p_j - x|| \leq ||p - x||$, $\forall p\in S$.
By the triangle inequality, $2||p_i  - x|| \geq ||p_i - p_j||$, and
by Lemma~\ref{lemma:separation}, $||p_i - p_j|| \geq 2\epsilon$.
Thus $||p - x|| \geq \epsilon$, $\forall p \in S$; therefore every point on the boundary is at least $\epsilon$ from $p\in S$.

Putting everything together, we have that $\forall p\in S$, $|g(p)| \geq 1 \geq \epsilon K_G$.
\end{proof}

Note that while Theorem~\ref{thm:global_vs_local} is stated for binary classifiers, the result holds for categorical classifiers as well.
We can modify the construction of $g$ from the above proof in a straightforward way to accommodate categorical classifiers.
In the case where there are $m$ different classes, the output of $g$ has $m$ dimensions, each corresponding to a different class.
Then, for $x$ in a Voronoi cell corresponding to $p_x\in S$ with label, $j$, we define $g_j(x) ::= \nicefrac{d(x)}{\epsilon}$ and $g_i(x) ::= 0$ $\forall i \neq j$.
We can see that, for all pairs of classes, $i$ and $j$, the Lipschitz constant of $g_i - g_j$ in this construction is the same as the Lipschitz constant of $g$ in the above proof, since only one dimension of the output of $g$ changes at once.
Thus, we can use the global bound suggested in Appendix~\ref{proof:tighter_bound} to certify the points in $S$.

\vspace{1.5em}
\subsection{Proof of Lemma
\ref{lemma:separation}
}\label{proof:separation}

\paragraph{Lemma~\ref{lemma:separation}.}
\textit{%
Suppose that for some classifier, $F$, and some set of points, $S$, $\forall x \in S$, $F$ is $\epsilon$-locally-robust at $x$.
Then $\forall x_1, x_2\in S$ such that $F(x_1) \neq F(x_2)$, $||x_1 - x_2|| > 2\epsilon$.
}


\begin{proof}
Suppose that for some classifier, $F$, and some set of points, $S$, $\forall x \in S$, $F$ is $\epsilon$-locally-robust at $x$.
Assume for the sake of contradiction that $\exists x_1, x_2 \in S$ such that $F(x_1) \neq F(x_2)$ but $||x_1 - x_2|| \leq 2\epsilon$.
Consider the midpoint between $x_1$ and $x_2$, $x' = (x_1 + x_2) / 2$.
Note that 
$$||x' - x_1|| = \frac{||x_1 - x_2||}{2} \leq \epsilon$$
Therefore, since $F$ is $\epsilon$-locally-robust at $x_1$, $F(x') = F(x_1)$.
By the same argument, $F(x') = F(x_2)$.
But this contradicts that $F(x_1) \neq F(x_2)$.\lightning
\end{proof}

\vspace{1.5em}
\section{Hyperparameters}\label{appendix:hyperparams}

\begin{table*}[!t]
    \begin{center}
    \resizebox*{!}{0.475\dimexpr\textheight-2\baselineskip\relax}{%
    \small
    \begin{tabular}{ccccccccc}
    \toprule
    \textit{architecture}  & \textit{dataset}& \multicolumn{2}{c}{\textit{data augmentation}} &\textit{warm-up}& \textit{batch size} & \# \textit{epochs} & $\epsilon_{\text{train}}$ & $\epsilon_{\text{test}}$\\
    \midrule
    \multirow{5}{*}{\shortstack[l]{\mnistmodelsmall\\ \\ GloRo}}  &MNIST& \multicolumn{2}{c}{\texttt{\texttt{None}}} & 0 & 512 & 500 & 0.3 & 0.3\\
     \cmidrule{2-9}\\
     & \textit{initialization} & \textit{init\_lr} & \multicolumn{2}{c}{\textit{lr\_decay}} & \textit{loss}  & $\epsilon$ \textit{schedule}& \textit{power\_iter}\\
     \cmidrule{2-9}\\
     &\texttt{\texttt{orthogonal}} & 1e-3 & \multicolumn{2}{c}{\texttt{decay\_to\_1e-6}} & \texttt{0.1,2.0,500}& \texttt{single}&5\\

    \midrule
    \midrule

    \textit{architecture}  & \textit{dataset}& \multicolumn{2}{c}{\textit{data augmentation}} &\textit{warm-up}& \textit{batch size} & \# \textit{epochs} & $\epsilon_{\text{train}}$ & $\epsilon_{\text{test}}$\\
    \midrule
    \multirow{5}{*}{\shortstack[l]{\mnistmodel\\ \\ GloRo}}  &MNIST& \multicolumn{2}{c}{\texttt{\texttt{None}}} & 0 & 512 & 200 & 1.74 & 1.58\\
     \cmidrule{2-9}\\
     & \textit{initialization} & \textit{init\_lr} & \multicolumn{2}{c}{\textit{lr\_decay}} & \textit{loss}  & $\epsilon$ \textit{schedule}& \textit{power\_iter}\\
     \cmidrule{2-9}\\
     &\texttt{\texttt{orthogonal}} & 1e-3 & \multicolumn{2}{c}{\texttt{decay\_to\_1e-6}} & \texttt{1.5}& \texttt{log}&5\\

    \midrule
    \midrule

    \textit{architecture} & \textit{dataset} & \multicolumn{2}{c}{\textit{data augmentation}} &\textit{warm-up}& \textit{batch size} & \# \textit{epochs} & $\epsilon_{\text{train}}$ & $\epsilon_{\text{test}}$\\
    \midrule
    \multirow{5}{*}{\shortstack[l]{\cifarmodel\\ \\ GloRo}} &CIFAR-10 & \multicolumn{2}{c}{\texttt{tfds}} & 0 & 512 &800&0.141 &0.141 \\
     \cmidrule{2-9}\\
     & \textit{initialization} & \textit{init\_lr} & \multicolumn{2}{c}{\textit{lr\_decay}} & \textit{loss}  & $\epsilon$ \textit{schedule}& \textit{power\_iter}\\
     \cmidrule{2-9}\\
     &\texttt{orthogonal}& 1e-3 & \multicolumn{2}{c}{\texttt{decay\_to\_1e-6}}&\texttt{1.2} &\texttt{log} &5\\

     \midrule
     \midrule

     \textit{architecture} & \textit{dataset} & \multicolumn{2}{c}{\textit{data augmentation}} &\textit{warm-up}& \textit{batch size} & \# \textit{epochs} & $\epsilon_{\text{train}}$ & $\epsilon_{\text{test}}$ \\
     \midrule
     \multirow{5}{*}{\shortstack[l]{\tinyimagenetmodel\\ \\ GloRo}} &Tiny-Imagenet &\multicolumn{2}{c}{\texttt{default}} & 0 & 512& 800& 0.141& 0.141\\
     \cmidrule{2-9}
      & \textit{optimizer} & \textit{init\_lr} & \multicolumn{2}{c}{\textit{lr\_decay}} & \textit{loss}  & $\epsilon$ \textit{schedule}& \textit{power\_iter}\\
      \cmidrule{2-9}\\
      & \texttt{default} & 1e-4 &\multicolumn{2}{c}{\texttt{decay\_to\_5e-6}}  &\texttt{1.2,10,800} &\texttt{log} & 5\\

    \bottomrule
    \end{tabular}}
    \end{center}
    \caption{
    Hyperparameters used for training (MinMax) GloRo Nets.}\label{appendix:fig:supply:hyperparameters-minmax}
\end{table*}

\begin{table*}[!t]
    \begin{center}
    \resizebox*{!}{0.95\dimexpr\textheight-2\baselineskip\relax}{%
    \small
    \begin{tabular}{ccccccccc}
    \toprule
    \textit{architecture}  & \textit{dataset}& \multicolumn{2}{c}{\textit{data augmentation}} &\textit{warm-up}& \textit{batch size} & \# \textit{epochs} & $\epsilon_{\text{train}}$ & $\epsilon_{\text{test}}$\\
    \midrule
    \multirow{5}{*}{\shortstack[l]{\mnistmodelsmall\\ \\ GloRo (CE)}}  &MNIST& \multicolumn{2}{c}{\texttt{None}} & 0 & 256& 500& 0.45& 0.3\\
     \cmidrule{2-9}\\
     & \textit{initialization} & \textit{init\_lr} & \multicolumn{2}{c}{\textit{lr\_decay}} & \textit{loss}  & $\epsilon$ \textit{schedule}& \textit{power\_iter}\\
     \cmidrule{2-9}\\
     &\texttt{default} & 1e-3 & \multicolumn{2}{c}{\texttt{decay\_to\_1e-6}} & \texttt{CE}& \texttt{single}&10\\

    \midrule
    \midrule

    \textit{architecture} & \textit{dataset}&  \multicolumn{2}{c}{\textit{data augmentation}} &\textit{warm-up}& \textit{batch size} & \# \textit{epochs} & $\epsilon_{\text{train}}$ & $\epsilon_{\text{test}}$ \\
    \midrule
    \multirow{5}{*}{\shortstack[l]{\mnistmodelsmall\\ \\ GloRo (T)}}  &MNIST& \multicolumn{2}{c}{\texttt{None}} & 0 & 256& 500& 0.45& 0.3\\
     \cmidrule{2-9}\\
     & \textit{initialization}& \textit{init\_lr} & \multicolumn{2}{c}{\textit{lr\_decay}} & \textit{loss}  & $\epsilon$ \textit{schedule}& \textit{power\_iter}\\
     \cmidrule{2-9}\\
     &\texttt{default} & 1e-3 & \multicolumn{2}{c}{\texttt{decay\_to\_1e-6}} & \texttt{0,2,500}& \texttt{single}&10\\

     \midrule
     \midrule

     \textit{architecture} & \textit{dataset}& \multicolumn{2}{c}{\textit{data augmentation}} &\textit{warm-up}& \textit{batch size} & \# \textit{epochs} & $\epsilon_{\text{train}}$ & $\epsilon_{\text{test}}$ \\
     \midrule
     \multirow{5}{*}{\shortstack[l]{\mnistmodel\\ \\ GloRo (CE)}} &MNIST& \multicolumn{2}{c}{\texttt{None}} & 0 & 256& 300& 1.75& 1.58\\
     \cmidrule{2-9}\\
     & \textit{initialization} & \textit{init\_lr} & \multicolumn{2}{c}{\textit{lr\_decay}} & \textit{loss}  & $\epsilon$ \textit{schedule}& \textit{power\_iter}\\
     \cmidrule{2-9}\\
     &\texttt{default} & 1e-3 & \multicolumn{2}{c}{\texttt{decay\_to\_5e-6}} & \texttt{CE}& \texttt{single}&10\\

    \midrule
    \midrule

    \textit{architecture} & \textit{dataset} & \multicolumn{2}{c}{\textit{data augmentation}} &\textit{warm-up}& \textit{batch size} & \# \textit{epochs} & $\epsilon_{\text{train}}$ & $\epsilon_{\text{test}}$ \\
    \midrule
    \multirow{5}{*}{\shortstack[l]{\mnistmodel\\ \\ GloRo (T)}} &MNIST & \multicolumn{2}{c}{\texttt{None}} & 0 & 256& 300& 1.75& 1.58 \\
    \cmidrule{2-9}\\
    & \textit{initialization}  & \textit{init\_lr} & \multicolumn{2}{c}{\textit{lr\_decay}} & \textit{loss}  & $\epsilon$ \textit{schedule}& \textit{power\_iter}\\
    \cmidrule{2-9}\\
    &\texttt{default} & 1e-3 & \multicolumn{2}{c}{\texttt{decay\_to\_5e-6}} & \texttt{0,3,300}& \texttt{single}&10\\

    \midrule
    \midrule

    \textit{architecture} & \textit{dataset} & \multicolumn{2}{c}{\textit{data augmentation}} &\textit{warm-up}& \textit{batch size} & \# \textit{epochs} & $\epsilon_{\text{train}}$ & $\epsilon_{\text{test}}$\\
    \midrule
    \multirow{5}{*}{\shortstack[l]{\cifarmodel\\ \\ GloRo (CE)}} &CIFAR-10 & \multicolumn{2}{c}{\texttt{default}} & 20 & 256 &800&0.1551 &0.141 \\
     \cmidrule{2-9}\\
     & \textit{initialization} & \textit{init\_lr} & \multicolumn{2}{c}{\textit{lr\_decay}} & \textit{loss}  & $\epsilon$ \textit{schedule}& \textit{power\_iter}\\
     \cmidrule{2-9}\\
     &\texttt{orthogonal}& 1e-3 & \multicolumn{2}{c}{\texttt{decay\_to\_1e-6}}&\texttt{CE} &\texttt{log} &5\\

    \midrule
    \midrule

    \textit{architecture} & \textit{dataset} & \multicolumn{2}{c}{\textit{data augmentation}} &\textit{warm-up}& \textit{batch size} & \# \textit{epochs} & $\epsilon_{\text{train}}$ & $\epsilon_{\text{test}}$\\
    \midrule
    \multirow{5}{*}{\shortstack[l]{\cifarmodel\\ \\ GloRo (T)}} &CIFAR-10 & \multicolumn{2}{c}{\texttt{default}} &20 & 256 &800&0.1551 &0.141 \\
     \cmidrule{2-9}\\
     & \textit{initialization} & \textit{init\_lr} & \multicolumn{2}{c}{\textit{lr\_decay}} & \textit{loss}  & $\epsilon$ \textit{schedule}& \textit{power\_iter}\\
     \cmidrule{2-9}\\
     &\texttt{default}& 1e-3 & \multicolumn{2}{c}{\texttt{decay\_to\_1e-6}}&\texttt{1.2} &\texttt{log} &5\\

     \midrule
     \midrule

     \textit{architecture} & \textit{dataset} & \multicolumn{2}{c}{\textit{data augmentation}} &\textit{warm-up}& \textit{batch size} & \# \textit{epochs} & $\epsilon_{\text{train}}$ & $\epsilon_{\text{test}}$ \\
     \midrule
     \multirow{5}{*}{\shortstack[l]{\tinyimagenetmodel\\ \\ GloRo (CE)}} &Tiny-Imagenet &\multicolumn{2}{c}{\texttt{default}} & 0 & 256& 250& 0.16& 0.141\\
     \cmidrule{2-9}
      & \textit{optimizer} & \textit{init\_lr} & \multicolumn{2}{c}{\textit{lr\_decay}} & \textit{loss}  & $\epsilon$ \textit{schedule}& \textit{power\_iter}\\
      \cmidrule{2-9}\\
      & \texttt{default} & 2.5e-4 &\multicolumn{2}{c}{\texttt{decay\_to\_5e-7}}  &\texttt{CE} &\texttt{single} & 5\\

     \midrule
     \midrule

     \textit{architecture} & \textit{dataset} & \multicolumn{2}{c}{\textit{data augmentation}} &\textit{warm-up}& \textit{batch size} & \# \textit{epochs} & $\epsilon_{\text{train}}$ & $\epsilon_{\text{test}}$ \\
     \midrule
     \multirow{5}{*}{\shortstack[l]{\tinyimagenetmodel\\ \\ GloRo (T)}} &Tiny-Imagenet &\multicolumn{2}{c}{\texttt{default}} & 0 & 256& 500& 0.16& 0.141\\
      \cmidrule{2-9}\\
      & \textit{initialization} & \textit{init\_lr} & \multicolumn{2}{c}{\textit{lr\_decay}} & \textit{loss}  & $\epsilon$ \textit{schedule}& \textit{power\_iter}\\
      \cmidrule{2-9}\\
      &\texttt{default} & 2.5e-4 & \multicolumn{2}{c}{\texttt{decay\_to\_5e-7}}  & \texttt{1,10,500}&\texttt{single} &1\\
    \bottomrule
    \end{tabular}}
    \end{center}
    \caption{
    Hyperparameters used for training (ReLU) GloRo Nets. We provide models trained with both TRADES (Definition~\ref{def:trades_loss}) loss (denoted by ``(T)'') and with cross-entropy loss (denoted by ``(CE)'').}\label{appendix:fig:supply:hyperparameters}
\end{table*}

In this appendix, we describe hyperparameters used in the training of GloRo Nets to produce the results in Section~\ref{sec:eval}. 
The full set of hyperparameters used for all experiments is shown in Tables~\ref{appendix:fig:supply:hyperparameters-minmax} and \ref{appendix:fig:supply:hyperparameters}. 
We explain each column as follows and discuss how a particular value is selected for each hyperparameter.

\paragraph{Architectures.}\label{appendix:hyperparams:architecture}

To denote architectures, we use $c(C,K,S)$ to denote a convolutional layer with $C$ output channels, a kernel of size $K \times K$, and strides of width $S$. 
We use \texttt{SAME} padding unless noted otherwise. 
We use $d(D)$ to denote a dense layer with $D$ output dimensions. 
We use MinMax~\cite{anil19a} or ReLU activations (see Appendix~\ref{appendix:minmax_vs_relu} for a comparison) after each layer except the top of the network, and do not include an explicit Softmax activation. 
Using this notation, the architectures referenced in Section~\ref{sec:eval} are as shown in the following list.

\begin{itemize}
    \item \mnistmodelsmall: c(16,4,2).c(32,4,2).d(100).d(10)
    \item \mnistmodel: c(32,3,1).c(32,4,2).c(64,3,1).c(64,4,2)\\.d(512).d(512).d(10)
    \item \cifarmodel: c(32,3,1).c(32,3,1).(32,4,2).(64,3,1)\\
    .c(64,3,1).c(64,4,2).d(512).d(10)
    \item \tinyimagenetmodel: c(64,3,1).c(64,3,1).c(64,4,2).c(128,3,1).\\c(128,4,2).c(256,3,1).(256,4,2).d(200)
\end{itemize}

We arrived at these architectures in the following way.
\mnistmodelsmall, \mnistmodel~and \cifarmodel~are used in the prior work~\cite{lee20local_margin, croce19mmr, wong2017provable} to evaluate the verifiable robustness, and we used them to facilitate a direct comparison on training cost and verified accuracy. 
For Tiny-ImageNet, we additionally explored the architecture described in~\cite{lee20local_margin} for use with that dataset, but found that removing one dense and one convolutional layer (denoted by \tinyimagenetmodel~in the list above) produced the same (or better) verified accuracy, but lowered the total training cost.

\paragraph{Data preprocessing.}\label{appendix:hyperparams:data-preprocessing}
For all datasets, we scaled the features to the range [0,1]. 
On some datasets, we used the following data augmentation pipeline \texttt{ImageDataGenerator} from \texttt{tf.keras}, which is denoted by \texttt{default} in Table~\ref{appendix:fig:supply:hyperparameters} and ~\ref{appendix:fig:supply:hyperparameters-minmax}.
\begin{verbatim}
  rotation_range=20
  width_shift_range=0.2
  height_shift_range=0.2
  horizontal_flip=True
  shear_range=0.1
  zoom_range=0.1
\end{verbatim}

When integrating our code with \texttt{tensorflow-dataset}, we use the following augmentaiton pipeline and denote it as \texttt{tfds} in Table~\ref{appendix:fig:supply:hyperparameters} and ~\ref{appendix:fig:supply:hyperparameters-minmax}.

\begin{verbatim}
    horizontal_flip=True
    zoom_range=0.25
    random_brightness=0.2
\end{verbatim}

Our use of augmentation follows the convention established in prior work~\cite{NEURIPS2018_358f9e7b,lee20local_margin}: we only use it on CIFAR and tiny-imatenet, but not on MNIST.

\paragraph{$\epsilon$ scheduling.}\label{appendix:hyperparams:epsilon-schedule}
Prior work has also established a convention of gradually scaling $\epsilon$ up to a value that is potentially larger than the one used to measure verified accuracy leads to better results.
We made use of the following schemes for accomplishing this.
\begin{itemize}
    \item No scheduling: we use `\texttt{single}' to denote we $\epsilon_\text{train}$ for all epochs.
    \item Linear scheduling: we use a string `\texttt{x,y,e}' to denote the strategy that at training epoch $t$, we use $\epsilon_t = x + (y - x)*(t/e)$ if $t \leq e$. When $t>e$, we use the provided $\epsilon_{\text{train}}$ to keep training the model.
    \item Logarithmic scheduling: we use `\texttt{log}' to denote that we increase the epsilon with a logarithmic rate from 0 to $\epsilon_{\text{train}}$.
\end{itemize}
We found that scheduling $\epsilon$ is often unnecessary when instead scheduling the TRADES parameter $\lambda$ (discussed later in this section), which appears to be more effective for that loss.
To select a scheme for scheduling $\epsilon$, we compared the results of the three options listed above, and selected the one that achieved the highest verified accuracy.
If there was no significant difference in this metric, then we instead selected the schedule with the least complexity, assuming the following order: \texttt{single}, (\texttt{x,y,e}), \texttt{log}.
When applying (\texttt{x,y,e}) and \texttt{log}, we began the schedule on the first epoch, and ended it on $(\mathrm{\#\ epochs}) / 2$.

\paragraph{Initialization \& optimization.}\label{appendix:hyperparams:initialization}
In Table~\ref{appendix:fig:supply:hyperparameters}, \texttt{default} refers to the Glorot uniform initialization, given by $\mathtt{tf.keras.initializers.GlorotUniform()}$.
The string `\texttt{ortho}' refers to an orthogonal initialization given by $\mathtt{tf.keras.initializers.Orthogonal()}$.
To select an initialization, we compared the verified accuracy achieved by either, and selected the one with the highest metric.
In the case of a tie, we defaulted to the Glorot uniform initialization.
We used the \texttt{adam} optimizer to perform gradient descent in all experiments, with the initial learning rate specified in Table~\ref{appendix:fig:supply:hyperparameters} and ~\ref{appendix:fig:supply:hyperparameters-minmax}, and default values for the other hyperparameters ($\beta_1=0.9$, $\beta_2=0.999$, $\epsilon=1e-07$, $\mathtt{amsgrad}$ disabled).

\paragraph{Learning rate scheduling.}\label{appendix:hyperparams:lr-schedule}
We write `\texttt{decay\_to\_lb}' to denote a schedule that continuously decays the learning rate to $\mathtt{lb}$ at a negative-exponential rate, starting the decay at $(\mathrm{\# epochs}) / 2$. 
To select $\mathtt{lb}$, we searched over values $\mathtt{lb} \in \{1 \times 10^{-7}, 5 \times 10^{-7}, 1 \times 10^{-6}, 5 \times 10^{-6}\}$, selecting the value that led to the best VRA.
We note that for all datasets except Tiny-Imagenet, we used the default initial rate of $1 \times 10^{-3}$.
On Tiny-Imagenet, we observed that after several epochs at this rate, as well as at $5 \times 10^{-4}$, the loss failed to decrease, so again halved it to arrive at $2.5 \times 10^{-4}$.

\paragraph{Batch size \& epochs.}\label{appendix:hyperparams:batch-size} 
For all experiments, we used minibatches of 256 instances.
Because our method does not impose a significant memory overhead, we found that this batch size made effective use of available hardware resources, increasing training time without impacting verified accuracy, when compared to minibatch sizes 128 and 512.
Because the learning rate, $\epsilon$, and $\lambda$ schedules are all based on the total number of epochs, and can have a significant effect on the verified accuracy, we did not monitor convergence to determine when to stop training.
Instead, we trained for epochs in the range $[100, 1000]$ in increments of 100, and when verified accuracy failed to increase with more epochs, attempted training with fewer epochs (in increments of 50), stopping the search when the verified accuracy began to decrease again.

\paragraph{Warm-up.}\label{appendix:hyperparams:warm-up}
\citet{lee20local_margin} noted improved performance when models were pre-trained for a small number of epochs before optimizing the robust loss.
We found that this helped in some cases with GloRo networks as well, in particular on CIFAR-10, where we used the same number of warm-up epochs as prior work.

\paragraph{$\lambda$ scheduling.}\label{appendix:hyperparams:trades}
When using the TRADES loss described in Section~\ref{sec:implementation}, we found that scheduling $\lambda$ often yielded superior results.
We use `\texttt{x,y,e}' to denote that at epoch $t$, we set $\lambda_t = x + (y-x) * (t/e)$ if $t < e$ else $\lambda_t = y$. 
We write `\texttt{x}' to denote we use $\lambda=x$ all the time.
To select the final $\lambda$, we trained on values in the range $[1,10]$ in increments of $1$, and on finding the whole number that yielded the best result, attempted to further refine it by increasing in increments $0.1$.

\paragraph{Power iteration.}\label{appendix:hyperparams:power-iteration}
As discussed in Section~\ref{sec:implementation}, we use power iteration to compute the spectral norm for each layer to find the layer-wise Lipschitz constants. 
In Table~\ref{appendix:fig:supply:hyperparameters}, \textit{power\_iter} denotes the number of iterations we run for each update during training.
We tried values in the set $\{1,2,5,10\}$, breaking ties to favor fewer iterations for less training cost.
After each epoch, we ran the power iteration until convergence (with tolerance $1 \times 10^{-5}$), and all of the verified accuracy results reported in Section~\ref{sec:eval} are calculated using a global bound based on running power iteration to convergence as well.
Since the random variables used in the power iterations are initialized as \texttt{tf.Variables}, they are stored in \texttt{.h5} files together with the architecture and network parameters. Therefore, one can directly use the converged random variables from the training phase during the test phase.

\paragraph{Search strategy.}
Because of the number of hyperparameters involed in our evaluation, and limited hardware resources, we did not perform a global grid search over all combinations of hyperparameters discussed here.
We plan to do so in future work, as it is possible that results could improve as we may have missed better settings than those explored to produce the numbers reported in our evaluation.
Instead, we adopted a greedy strategy, prioritizing the choices that we believed would have the greatest impact on verified accuracy and training cost.
In general, we explored parameter choices in the following order: $\epsilon$ schedule, $\lambda$ schedule, \# epochs, LR decay, warm-up, initialization, \# power iterations, minibatch size.

\section{Comprehensive VRA Comparisons}
\label{appendix:comprehensive_vra}

\begin{table}
    \centering
    \resizebox{\columnwidth}{!}{%
    \begin{tabular}{ll|cccl}
        \multicolumn{6}{c}{\textbf{Deterministic Guarantees}} 
        \\
        \midrule
        \hspace{2em} &\textit{method} & Clean (\%)& PGD (\%) & VRA(\%)&\hspace{2em} \\
        \midrule
        \multicolumn{6}{c}{\textbf{MNIST} $(\epsilon=0.3)$} \\
        \midrule
        &GloRo & 99.0 &97.8 &95.7 \\
        &BCP & 93.4 &89.5 &84.7\\
        &KW & 98.9 & 97.8&94.0\\
        &MMR  & 98.2 &96.2 &93.6\\
        \midrule
        \multicolumn{6}{c}{\textbf{MNIST} $(\epsilon=1.58)$} \\
        \midrule
        &GloRo & 97.0 & 81.9 &62.8\\
        &BCP & 92.4 &65.8 &47.9\\
        &KW & 88.1 &67.9 &44.5\\
        &BCOP  & 98.8 & - &56.7\\
        &LMT  &  86.5& 53.6 &40.6\\
        \midrule
        \multicolumn{6}{c}{\textbf{CIFAR} $(\epsilon=\nicefrac{36}{255})$} \\
        \midrule
        &GloRo & 77.0 & 69.2 & 58.4\\
        &BCP & 65.7 & 60.8&51.3\\
        &KW & 60.1 & 56.2 & 50.9\\
        &Cayley  &  75.3 & 67.6 &59.1\\
        &BCOP  & 72.4 & 64.4 &58.7\\
        &LMT  & 63.1 & 58.3&38.1\\
        \midrule
        \multicolumn{6}{c}{} \\
        \multicolumn{6}{c}{\textbf{Stochastic Guarantees}} \\
        &\textit{method} & Clean (\%)& PGD (\%) & VRA(\%)\\
        \midrule
        \multicolumn{6}{c}{\textbf{CIFAR} $(\epsilon=0.5)$} \\
        \midrule
        &RS & 67.0 & - &49.0\\
        &SmoothADV & - & - & 63.0\\
        &MACER & 81.0 & - & 59.0\\
        \bottomrule
        \end{tabular}}
        \caption{Comprehensive VRA comparisons for deterministic guarantees and stochastic guarantees. Best results reported in the literature are included in the table.}
        \label{appendix:figure:full-results}
\end{table}

In Section~\ref{sec:eval} of the main paper, we compare, in depth, the performance of GloRo Nets to two approaches to deterministic certification that have been reported as achieving the state-of-the-art in recently published work on robustness certification~\cite{lee20local_margin}.

For completeness, we present a brief overview of a wider range of approaches, providing the VRAs reported in the original respective papers.
Table~\ref{appendix:figure:full-results} contains VRAs reported by several other approaches to deterministic certification, including the methods compared against in Section~\ref{sec:eval}: KW~\cite{wong2017provable} and BCP~\cite{lee20local_margin}; work that was superseded by KW or BCP: MMR~\cite{croce19mmr} and LMT~\cite{tsuzuku18margin}; work that we became aware of after the completion of this work: BCOP~\cite{NEURIPS2019_1ce3e6e3}; and concurrent work: Cayley~\cite{trockman2021orthogonalizing}.
In addition, we include work that provides a \emph{stochastic} guarantee: Randomized Smoothing (RS)~\cite{cohen19smoothing}, SmoothADV~\cite{NEURIPS2019_3a24b25a}, and MACER~\cite{Zhai2020MACER}.
The results for stochastic certification typically use different radii, as reflected in Table~\ref{appendix:figure:full-results}.
We note that because these numbers are taken from the respective papers, the results in Table~\ref{appendix:figure:full-results} should be interpreted as ball-park figures, as they do not standardize the architecture, data scaling and augmentation, etc., and are thus not truly ``apples-to-apples.'' 

We find that GloRo Nets provide the highest VRA for both $\epsilon=0.3$ and $\epsilon=1.58$ on MNIST. 
GloRo Nets also match the result on CIFAR-10 from concurrent work, Cayley, coming within one percentage point of the VRA reported by \citeauthor{trockman2021orthogonalizing}.

\vspace{0.4em}
\section{MinMax vs. ReLU GloRo Nets}
\label{appendix:minmax_vs_relu}

\begin{figure*}[t]
\centering
\begin{subfigure}[t]{0.62\textwidth}
\vspace{1.0em}
\resizebox{\textwidth}{!}{%
\begin{tabular}{l|c|ccc|ccc}
\toprule
\textit{method} & Model & Clean (\%) & PGD (\%)& VRA(\%)  & Sec./epoch & \# Epochs & Mem. (MB) \\
\midrule
\multicolumn{8}{c}{\textbf{MNIST} $(\epsilon=0.3)$} \\
\midrule
ReLU GloRo (CE) & \smallmnistmodel &
    98.4 &96.9 & 94.6 &
    0.7 & 500 & 0.7 \\
ReLU GloRo (T) & \smallmnistmodel &
    98.7 & 97.4& 94.6 &
    0.7 & 500 & 0.7 \\
MinMax GloRo & \smallmnistmodel &
99.0 &97.8 & \textbf{95.7}&
    0.9 & 500 & 0.7 \\
\midrule
\multicolumn{8}{c}{\textbf{MNIST} $(\epsilon=1.58)$} \\
\midrule
ReLU GloRo (CE) & \mnistmodel &
    92.9 & 68.9 & 50.1 &
    2.3 & 300 & 2.2 \\
ReLU GloRo (T) &\mnistmodel &
    92.8 & 67.0 & 51.9 &
    2.0 & 300 & 2.2 \\
MinMax GloRo & \mnistmodel &
97.0 & 81.9 & \textbf{62.8}&
    3.7 & 300 & 2.7 \\
\midrule
\multicolumn{8}{c}{\textbf{CIFAR-10} $(\epsilon=\nicefrac{36}{255})$} \\
\midrule
ReLU GloRo (CE) & \cifarmodel &
    70.7 & 63.8 & 49.3 &
    3.2 & 800 & 2.6 \\
ReLU GloRo (T) & \cifarmodel &
    67.9 & 61.3 & 51.0 &
    3.3 & 800 & 2.6 \\
MinMax GloRo & \cifarmodel &
77.0 & 69.2 & \textbf{58.4}&
    6.9 & 800 & 3.6 \\
\midrule
\multicolumn{8}{c}{\textbf{Tiny-Imagenet} $(\epsilon=\nicefrac{36}{255})$} \\
\midrule
ReLU GloRo (CE) & \tinyimagenetmodel & 31.3
    & 28.2 & 13.2 &
    14.0 & 250 & 7.3  \\
ReLU GloRo (T) & \tinyimagenetmodel &27.4
     & 25.6 & 15.6  &
    13.7 & 500 & 7.3 \\
MinMax GloRo & \tinyimagenetmodel
& 35.5
& 32.3 & \textbf{22.4}&
    40.3 & 800 & 10.4 \\
\bottomrule
\end{tabular}}
\caption{}\label{fig:appendix:train-results}
\end{subfigure}
\hspace{0.1em}
\begin{subfigure}[t]{0.35\textwidth}
\vspace{0.1 em}
\resizebox{\textwidth}{!}{%
\begin{tabular}{l|c|cc}
\toprule
\textit{method} & Model & Time (sec.) & Mem. (MB)\\
\midrule
ReLU GloRo & \cifarmodel & 0.2 & 2.5 \\
MinMax GloRo & \cifarmodel & 0.4 & 1.8 \\
\bottomrule
\end{tabular}
}
\caption{}\label{fig:appendix:cert-results}
\vspace{0.5em}
\resizebox{\textwidth}{!}{%
\begin{tabular}{l|c|cc}
\toprule
\textit{method} \hspace{1em} & global UB & global LB & local LB \\
\midrule
\multicolumn{4}{c}{\textbf{MNIST} $(\epsilon=1.58)$} \\
\midrule
Standard     & $5.4\cdot10^4$ & $1.4\cdot10^2$ & $17.1$ \\
ReLU GloRo   & $3.2$          & $3.0$          & $2.1$ \\
MinMax GloRo & $2.3$          & $1.9$          & $0.8$ \\
\midrule
\multicolumn{4}{c}{\textbf{CIFAR-10} $(\epsilon=\nicefrac{36}{255})$} \\
\midrule
Standard & $1.2\cdot10^7$ & $1.1\cdot10^3$ & $96.2$ \\
ReLU GloRo    & $18.9$         & $11.4$         & $6.2$ \\
MinMax GloRo  & $15.8$         & $11.0$         & $3.7$ \\
\midrule
\multicolumn{4}{c}{\textbf{Tiny-Imagenet} $(\epsilon=\nicefrac{36}{255})$} \\
\midrule
Standard     & $2.2\cdot10^7$ & $3.6\cdot10^2$ & $40.7$ \\
ReLU GloRo   & $7.7$          & $3.3$          & $1.5$ \\
MinMax GloRo & $12.5$         & $5.9$          & $0.8$ \\
\bottomrule
\end{tabular}
}
\caption{}\label{fig:appendix:lipschitz-results}
\end{subfigure}
\caption{
\textbf{(\subref{fig:eval:train-results})}
Certifiable training evaluation results on benchmark datasets. Best results highlighted in bold. 
For ReLU GloRo Nets, we provide models trained with both TRADES (Definition~\ref{def:trades_loss}) and with cross-entropy: ``(T)'' indicates that TRADES loss was used and ``(CE)'' indicates that cross-entropy was used. 
\textbf{(\subref{fig:eval:cert-results})}
Certification timing and memory usage results on CIFAR-10 ($\epsilon=\nicefrac{36}{255}$).
\textbf{(\subref{fig:eval:lipschitz-results})}
Upper and lower bounds on the global and average local Lipschitz constant.
In (\subref{fig:eval:train-results}) and (\subref{fig:eval:cert-results}), peak GPU Memory usage is calculated per-instance by dividing the total measurement by the training or certification batch size.
}
\end{figure*}

Recently, \citet{anil19a} proposed replacing ReLU activations with sorting activations to construct a class of \emph{universal Lipschitz approximators}, that that can approximate any Lipschitz-bounded function over a given domain, and \citet{cohen2019universal} subsequently studied the application to robust training.
We found that these advances in architecture complement our work, improving the VRA performance of GloRo Nets substantially compared to ReLU activations.

The results achieved by GloRo Nets in Figure~\ref{fig:eval:train-results} in Section~\ref{sec:eval} of the main paper are achieved using \emph{MinMax} activations~\cite{anil19a} rather than ReLU activations.
Figure~\ref{fig:appendix:train-results} shows a comparison of the VRA that can be achieved by GloRo Nets using ReLU activations as opposed to MinMax activations.
We see that in each case, the GloRo Nets using MinMax activations outperform those using ReLU activations by a several percentage points.
Nonetheless, the ReLU-based GloRo Nets are still competitive with the VRA performance of KW and BCP.

We observed that MinMax activations result in a slight penalty to training and evaluation cost, as they are slightly slower to compute than ReLU activations.
Figures~\ref{fig:appendix:train-results} and \ref{fig:appendix:cert-results} provide the cost in terms of time and memory incurred by GloRo Nets using each activation function.
We see that the MinMax-based GloRo Nets are slightly slower and more memory-intensive; however, the difference is not particularly significant.

Finally, we compared the Lipschitz bounds obtained on MinMax and ReLU GloRo Nets, presented in Figure~\ref{fig:appendix:lipschitz-results}.
We see that the Lipschitz bounds are fairly similar, in terms of both their magnitude a well as their tightness with respect to the empirical lower bounds.

\section{Measuring Memory Usage}\label{appendix:memory_usage}

In our experiments, we used Tensorflow to train and evaluate standard and GloRo networks, and Pytorch to train and evaluate KW and BCP (since \citet{wong2017provable} and \citet{lee20local_margin} implement their respective methods in Pytorch). 
To measure memory usage, we invoked $\mathtt{tf.contrib.memory\_stats.MaxBytesInUse()}$ at the end of each epoch for standard and GloRo networks, and took the peak active use from $\mathtt{torch.cuda.memory\_summary()}$ at the end of each epoch for KW and BCP.

We note that some differences may arise as a result of differences in memory efficiency between Tensorflow and Pytorch.
In particular, Pytorch enables more control over memory management than does Tensorflow.
In order to mitigate this difference as much as possible, we did not disable gradient tracking when evaluating certification times and memory usage in Pytorch.
While gradient tracking is unnecessary for certification (it is only required for training), Tensorflow does not allow this optimization, so by forgoing it the performance results recorded in Figure~\ref{fig:eval:cert-results} in Section~\ref{sec:eval} are more comparable across frameworks.

In Section~\ref{sec:eval:cost}, we note that Randomized Smoothing (RS) training times have been omitted.
This is because RS essentially acts as a post-processing method on top of a pre-trained model.
In practice the only difference between the training routine to produce a model for RS and standard training is the addition of Gaussian noise (mathcing the noise radius used for smoothing) to the data augmentation;
we assume that this has a negligible impact on training cost.

\section{Optimizing for Lipschitz Lower Bounds}\label{appendix:lower_bounds}

Figure~\ref{fig:eval:lipschitz-results} in Section~\ref{sec:eval} gives empirical lower bounds on the global and average local Lipschitz constants on the models trained in our evaluation.
We use optimization to obtain these lower bounds; further details are provided below.

\paragraph{Global Lower Bounds.}
We use the \emph{margin Lipschitz constant}, $K^*_{ij}$ (Definition~\ref{def:margin_lip} in Appendix~\ref{proof:tighter_bound}), which takes a different value for each pair of classes, $i$ and $j$.
To obtain the lower bound we optimize
\begin{equation*}\label{eq:glob_lower}
    \max_{x_1, x_2}~\max_i\left\{\frac{\left|f_{j_1}(x_1) - f_i(x_1) - \left(f_{j_1}(x_2) - f_i(x_2)\right)\right|}{||x_1 - x_2||}\right\}
\end{equation*}
where $j_1 = F(x_1)$.
Optimization is performed using Keras' default \texttt{adam} optimizer with 7,500 gradient steps.
Both $x_1$ and $x_2$ are initialized to random points in the test set;
we perform this optimization over 100 such initial pairs, and report the maximum value obtained over all initializations.

\paragraph{Local Lower Bounds.}
We use a variant of the \emph{margin Lipschitz constant} (Definition~\ref{def:margin_lip} in Appendix~\ref{proof:tighter_bound}) analogous to the local Lipschitz constant at a point, $x_0$, with radius $\epsilon$.
To obtain this lower bound we optimize
\begin{equation*}\label{eq:loc_lower}
    \max_{x_1, x_2~}~\max_i\left\{\frac{\left|f_{j}(x_1) - f_i(x_1) - \left(f_{j}(x_2) - f_i(x_2)\right)\right|}{||x_1 - x_2||}\right\}
\end{equation*}
$$
    \text{subject to}~~||x_1 - x_0|| \leq \epsilon,~||x_2 - x_0|| \leq \epsilon
$$
where $j = F(x_0)$.
Optimization is performed using Keras' default \texttt{adam} optimizer with 5,000 gradient steps.
After each gradient step, $x_1$ and $x_2$ are projected onto the $\epsilon$-ball centered at $x_0$.
Both $x_1$ and $x_2$ are initialized to random points in the test set, and $x_0$ is a fixed random point in the test set.
We perform this optimization over 100 random choices of $x_0$, and report the mean value.

\paragraph{Discussion.}
In Section~\ref{sec:eval:tightness}, we observe that the global upper bound is fairly tight on the GloRo Net trained on MNIST, but decreasingly so on CIFAR-10 and Tiny-Imagenet.
While this suggests that there is room for improvement in terms of the bounds obtained by GloRo Nets, we make note of two subtleties that may impact these findings.

First, the tightness decreases inversely with the dimensionality of the input.
While it is reasonable to conclude that learning tight GloRo Nets in higher dimensions becomes increasingly difficult, it is worth noting that the optimization process described above also becomes more difficult in higher dimensions, meaning that some of the looseness may be attributable to looseness in the \emph{lower} bound, rather than in the upper bound.

Second, the hyperparameters used may have an effect on the tightness of the Lipschitz bound.
As seen in Appendix~\ref{appendix:hyperparams}, different hyperparameters were used on MNIST, CIFAR-10, and Tiny-Imagenet; some of these differences were selected based on impacting training time, which is of greater concern for larger datasets that naturally take longer to train.
Specifically, we note that fewer power iterations were used for CIFAR-10, and even fewer for Tiny-Imagenet.
While this is good for expediency, and still produces state-of-the-art VRA, we note that tighter bounds may be learned by putting more computation time into training, in the form of more power iterations (for example).
More generally, this speculation suggests that slightly different training strategies, hyperparameters, etc., from the ones used in this work may be sufficient to improve the bounds and the VRA achieved by GloRo Nets.
We conclude that future work should further explore this possibility.

\end{document}